\newtheorem{theorem}{Theorem}[section]
\newtheorem{lemma}[theorem]{Lemma}
\newtheorem{assumption}[theorem]{Assumption}
\newtheorem{remark}[theorem]{Remark}
\DeclareMathOperator*{\argmin}{arg\,min}
\definecolor{darkgreen}{rgb}{0,0.5,0}
\definecolor{purple}{rgb}{1,0,1}
\newcommand{\kibitz}[2]{\ifnum\Comments=1\textcolor{#1}{#2}\fi}
\title{Learning in Online MDPs: \\Is there a Price for Handling the Communicating Case?}
\author{ {\hspace{1mm}Gautam Chandrasekaran} \\
	Department of Computer Science\\
    University of Texas at Austin\\
	Austin, TX 78705, USA \\
	\texttt{gautamc@cs.utexas.edu} \\
	\and
{\hspace{1mm} Ambuj Tewari} \\
Department of Statistics\\
    University of Michigan\\
	Ann Arbor, MI 48109, USA \\
	\texttt{tewaria@umich.edu} \\}
\begin{document}
\maketitle

\begin{abstract}
It is a remarkable fact that the same $O(\sqrt{T})$ regret rate can be achieved in both the Experts Problem and the Adversarial Multi-Armed Bandit problem albeit with a worse dependence on number of actions in the latter case. In contrast, it has been shown that handling online MDPs with communicating structure and bandit information incurs $\Omega(T^{2/3})$ regret even in the case of deterministic transitions. Is this the price we pay for handling communicating structure or is it because we also have bandit feedback? In this paper we show that with full information, online MDPs can still be learned at an $O(\sqrt{T})$ rate even in the presence of communicating structure. We first show this by proposing an efficient follow the perturbed leader (FPL) algorithm for the deterministic transition case. We then extend our scope to consider stochastic transitions where we first give an inefficient $O(\sqrt{T})$-regret algorithm (with a mild additional condition on the dynamics). Then we show how to achieve $O\left(\sqrt{\frac{T}{\alpha}}\right)$ regret rate using an oracle-efficient algorithm but with the additional restriction that the starting state distribution has mass at least $\alpha$ on each state.

\end{abstract}

\section{Introduction}

In this work, we study online learning in Markov Decisions Processes. In this setting, we have an \textit{agent} interacting with an adversarial \textit{environment}. The agent observes the state of the environment and takes an action. The action incurs an associated loss and the environment moves to a new state. The state transition dynamics are assumed to be Markovian, i.e., the probability distribution of the new state is fully determined by the action and the old state. The transition dynamics are fixed and known to the learner in advance. However the losses are chosen by the adversary. The adversary is assumed to be oblivious (the entire loss sequence is chosen before the interaction begins).  We assume that the environment reveals {\it full information} about the losses at a given time step to the agent after the corresponding action is taken. The total loss incurred by the agent is the sum of losses incurred in each step of the interaction. We denote the set of states by $S$ and the set of actions by $A$. The objective of the agent is to minimized its total loss.

 This setting was first studied in the seminal work of \citet{10.2307/40538442}. They studied the restricted class of \textit{ergodic} MDPs where every policy induces a Markov chain with a single recurrent class. They designed an efficient (runs in polytime in MDP parameters and time of interaction) algorithm that achieved $O(\sqrt{T})$ regret with respect to the best stationary policy in hindsight. They assumed full information of the losses and that the MDP dynamics where known beforehand. This work was extended to {\it bandit feedback} by \citet{DBLP:journals/tac/NeuGSA14}\footnote[1]{with an additional assumption on the minimum stationary probability mass in any state}. They also achieved a regret bound of $O(\sqrt{T})$. Bandit feedback is a harder model in which the learner only receives information about its own losses.

\begin{table}[]
\begin{tabular}{l|lll}
 & \multicolumn{1}{l}{No state} & \multicolumn{1}{l}{Ergodic} & \multicolumn{1}{l}{Communicating} \\ \hline
\multicolumn{1}{l|}{Full Info} & $\sqrt{T}$ & $\sqrt{T}$ & $\sqrt{T}$\textbf{\hspace{4pt} \fbox{\tiny THIS PAPER}\vspace{2pt}} \\ \hline
\multicolumn{1}{l|}{Bandit Info} & $\sqrt{T}$ & $\sqrt{T}$\footnotemark[1] & $T^{2/3}$
 \\ \hline
\end{tabular}
\label{tab:rates}
\caption{The dependence on time horizon $T$ of the optimal regret, under full and bandit feedbacks, as the state transition dynamics become more complex.}
\end{table}

 In this paper we will look at the more general class of {\em communicating} MDPs, where, for any pair of states, there is a policy such that the time it takes to reach the second state from the first has finite expectation. In the case of bandit feedback with deterministic transitions,  \cite{10.5555/3042817.3043012} designed an algorithm that achieved $O\left(T^{2/3}\right)$ regret. This regret bound was proved to be tight by a matching lower bound in \citet{switch}. This regret lower bound was proved by a reduction from the problem of adversarial multi-armed bandits with \textit{switching costs}. In this setting, the agent incurs an additional cost every time it switches the arm it plays. Their lower bound definitively proves that in the case of bandit information, online learning over communicating MDPs is {\em statistically harder} than the adversarial multi armed bandits problem for which we have $\tilde{O}(\sqrt{T})$ regret algorithms (\citet{EXP3}). Their result gives rise to the natural question: is the high regret due to the communicating structure or bandit feedback(or both)? In the case of experts with switching cost, we know $O(\sqrt{T})$ regret algorithms such as FPL(\cite{KALAI2005291}). Using this, we give an $O(\sqrt{T})$ algorithm for online learning in communicating MDPs with full information. Thus, we show that having communicating structure alone does not add any statistical price (see Table~\ref{tab:rates}).

\subsection{Our Contributions}
In this paper, we show that online learning over MDPs with full information is not {\em statistically harder\footnote[2]{upto polynomial factors in the number of states and actions}\footnote[3]{assuming the existence of a state with a ``do nothing" action}} than the problem of online learning with expert advice. In particular, we design an efficient algorithm that learns to act in communicating ADMDPs with $O(\sqrt{T})$ regret under full information feedback against the best deterministic policy in hindsight. This is the first $O(\sqrt{T})$ regret algorithm for this problem. We prove a matching\footnotemark[2] regret lower bound in this setting. We also extend the techniques used in the previous algorithm to design an algorithm that runs in time exponential in MDP parameters that achieves $O(\sqrt{T})$ regret in the general class of communicating MDPs (albeit with an additional mild assumption\footnotemark[3]). Again, this is the first algorithm that achieves $O(\sqrt{T})$ regret against this large class of MDPs. Before this, $O(\sqrt{T})$ regret algorithms were only known for the case of ergodic MDPs. We also give an $O\left(\sqrt{\frac{T}{\alpha}}\right)$ regret algorithm for communicating MDPs with a  start state distribution having probability mass at least $\alpha$ on each state that is efficient when given access to an optimization oracle.

\section{Related Work}
The problem we study in this paper is commonly referred to in literature as online learning in MDPs over an infinite horizon. This problem was first studied for MDPs with an ergodic transition structure. \citet{10.2307/40538442} and \citet{DBLP:journals/tac/NeuGSA14} studied this problem under full information and partial information respectively. The former achieved $O(\sqrt{T})$ regret for all ergodic MDPs, whereas the latter achieved $O(\sqrt{T})$ regret for ergodic MDPs satisfying an additional assumption\footnotemark[1]. The problem of online learning in deterministic communicating MDPs(ADMDPs) was studied by \citet{10.5555/3020652.3020666} and \cite{10.5555/3042817.3043012}. They consider bandit feedback and achieve $O(T^{3/4})$ and $O(T^{2/3})$ respectively.

As mentioned in the introduction, a closely related problem is that of online learning with switching costs. In the case of full information, algorithms like FPL (\cite{KALAI2005291}) achieves $O(\sqrt{T})$ regret with switching cost. In the case of bandit feedback, \cite{switching_ub} gives an algorithm that achieves $O(T^{2/3})$ regret with switching cost. This was proved to be tight by \cite{switch} where they proved a matching lower bound.

Subsequent to the release of an earlier version of this paper, \citet{ftpl_mdp_bandit} gave an inefficient $O(T^{2/3})$ and oracle-efficient $O(T^{5/6})$ regret algorithm for online learning over Communicating\footnotemark[2] MDPs with bandit information. Their algorithms use our Switch\_Policy procedure from Algorithm~{\ref{alg:alpha}} and thus require the same assumption\footnotemark[3] as us.

\section{Preliminaries}
Fix the finite state space $S$, finite action space $A$, and transition probability matrix $P$ where $P(s,a,s')$ is the probability of moving from state $s$ to $s'$ on action $a$.

In the case of ADMDP, the transitions are deterministic and hence the ADMDP can also be represented by a directed graph $G$ with vertices corresponding to states $S$. The edges are labelled by the actions. An edge from $s$ to $s'$ labelled by action $a$ exists in the graph when the ADMDP takes the state $s$ to state $s'$ on action $a$. This edge will be referred to as $(s,a,s')$. 

A (stationary) policy $\pi$ is a mapping $\pi:S\times A\to[0,1]$ where  $\pi(s,a)$ denotes the probability of taking action $a$ when in state $s$. When the policy is deterministic, we overload the notation and define $\pi(s)$ to be the action taken when the state is $s$.
The interaction starts in an arbitrary start state is $s_1 \in S$.
The adversary chooses a sequence of loss functions $\ell_1,\ldots,\ell_T$ obliviously where each $\ell_t$ is a map from $S \times A$ to $[0,1]$.

An algorithm $\mathcal{A}$ that interacts with the online MDP chooses the action to be taken at each time step. It maintains a probability distribution over actions denoted by $\mathcal{A}(.\mid s,\ell_1,\ldots,\ell_{t-1})$ which depends on the current state and the sequence of loss functions seen so far. 
The expected loss of the algorithm $\mathcal{A}$ is 
$$L(\mathcal{A})=\mathbb{E}\left[\sum_{t=1}^{T}{\ell_{t}(s_t,a_t)}\right]$$ where $a_t\sim \mathcal{A}\left(.\mid s_t,\ell_1,\ldots,\ell_{t-1}\right), s_{t+1} \sim P(\cdot,s_t,a_t)$
For a stationary policy $\pi$, the loss of the policy is 
$$L^{\pi}=\mathbb{E}\left[\sum_{t=1}^{T}\ell_t(s_t,a_t)\right]$$ where $a_t\sim \pi(.\mid s_t), s_{t+1} \sim P(\cdot,s_t,a_t)$.
The regret of the algorithm is defined as
$$R(\mathcal{A})=L(\mathcal{A})-\min_{\pi\in \Pi}L^{\pi}\ .$$ 
The total expected loss of the best policy in hindsight is denoted by $L^*$. Thus,
$$L^*=\min_{\pi\in \Pi}L^{\pi} \ .$$

For any stationary policy $\pi$, let $T(s'\mid M,\pi,s)$ be the random variable for the first time step in which $s'$ is reached when we start at state $s$ and follow policy $\pi$ in MDP $M$.
We define the diameter $D(M)$ of the MDP as
$$D(M)=\max_{s\neq s'}\min_{\pi}\mathbb{E}\left[T(s'\mid M,\pi,s)\right].$$A {\it communicating MDP} is an MDP where $D(M) < \infty$.

\subsection{Preliminaries on ADMDPs}

In this section, we use the graph $G$ and the ADMDP interchangeably.
A stationary deterministic policy $\pi$ induces a subgraph $G_{\pi}$ of $G$ where $(s,a,s')$ is an edge in $G_{\pi}$ if and only if $\pi(s)=a$ and the action $a$ takes state $s$ to $s'$.

A communicating ADMDP corresponds to a strongly connected graph. This is because the existence of a policy that takes state $s$ to $s'$ also implies the existence of a path between the two vertices in the graph $G$. 

The subgraph $G_{\pi}$  induced by policy $\pi$ in the communicating ADMDP is the set of transitions $(s,a,s')$ that are possible under $\pi$. Each components of $G_\pi$ is either a cycle or an initial path followed by a cycle. Start a walk from any state $s$ by following the policy $\pi$. Since the set of states is finite, eventually a state must be repeated and this forms the cycle. 

Let $N(s,a)$ be the next state after visiting state $s$ and taking action $a$.
Define $I(s)$ as 
$$I(s)=\{(s',a)\mid N(s',a)=s\}.$$
The \textit{period} of a vertex $v$ in $G$ is the greatest common divisor of the lengths of all the cycles starting and ending at $v$. 
In a strongly, connected graph, the period of each vertex can be proved to be equal(\cite{markov_chains} Chap. 2, Thm 4.2). Thus, the period of a strongly connected $G$ is well defined. If the period of $G$ is 1, we call $G$ \textit{aperiodic}.

Let $\mathcal{C}_{(s,k)}$ be the set of all closed walks of $G$ of length $k$ such that the start vertex is $s$. The elements of ${\mathcal{C}_{(s,k)}}$ are represented by the sequence of edges in the walks.

Note that the cycles induced by any stationary deterministic policy $\pi$ that are of length $k$ and contain the vertex $s$ will be in $\mathcal{C}_{(s,k)}$. However, $\mathcal{C}_{(s,k)}$ can also contain cycles not induced by policies(it can contain cycles that are not simple). We use $\mathcal{C}$ to denote $\bigcup_{s\in S,k\in [k]} \mathcal{C}_{(s,k)}$.  We sometimes loosely refer elements of $\mathcal{C}$ as cycles. We define $a_t(c)$ to be the action take by $c$ in the $t_{th}$ step if we start following $c$ from the beginning of the interaction. Similarly, $s_t(c)$ is the state that you reach after following $c$ for $t-1$ steps from the start of the interaction. We define $k(c)$ as the length of the cycle $c$.

The vertices of a strongly connected graph $G$ with period $\gamma$ can be partitioned into $\gamma$ non-empty cycle classes, $C_1,\ldots,C_{\gamma}$ where each edge goes $C_{i}$ to $C_{i+1}$.

\begin{theorem}
\label{thm:critical_length}
If $G$ is strongly connected and aperiodic, there exists a critical length $d$ such that for any $\ell\geq d$, there exists a path of length $\ell$ in $G$ between any pair of vertices. Also, $d\leq n(n-1)$ where $n$ is the number of vertices in the graph. 
\end{theorem}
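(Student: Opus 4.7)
The plan is to reduce the theorem to two classical facts: (i) for each vertex $v$ the lengths of closed walks at $v$ form an additive subsemigroup of $\mathbb{Z}_{\geq 1}$ whose $\gcd$ equals the period of $v$, and (ii) an additively closed set of positive integers with $\gcd = 1$ contains all sufficiently large integers (the Sylvester--Frobenius / Chicken McNugget phenomenon). Combined with the fact that any two vertices in a strongly connected graph on $n$ vertices are joined by a simple path of length at most $n-1$, these two facts will give the existence of $d$ immediately, and a quantitative bound with more care.

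More concretely, fix vertices $u, w$, let $P$ be a shortest $u$-to-$w$ path (so $|P| \leq n - 1$), and let $N_w \subseteq \mathbb{Z}_{\geq 1}$ be the set of lengths of closed walks at $w$. Because closed walks concatenate, $N_w$ is closed under addition, and its $\gcd$ is the period of $w$, which equals $1$ by the aperiodicity hypothesis together with the cited result that the period is well-defined in a strongly connected graph. By Schur's theorem on numerical semigroups with $\gcd = 1$, there is a threshold $T_w$ such that every integer $k \geq T_w$ lies in $N_w$. Then for any $\ell \geq T_w + (n-1)$, I concatenate $P$ with a closed walk at $w$ of length $\ell - |P| \geq T_w$ to obtain a walk of length $\ell$ from $u$ to $w$. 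Taking $d = \max_w T_w + (n-1)$ establishes the existence of a critical length.

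For the quantitative bound $d \leq n(n-1)$, the cleanest route is to invoke Wielandt's theorem on primitive non-negative matrices: the $n \times n$ adjacency matrix $A$ of $G$ is primitive precisely because $G$ is strongly connected and aperiodic, and Wielandt's sharp bound says $A^k$ is entrywise positive for all $k \geq (n-1)^2 + 1$. Since $(n-1)^2 + 1 \leq n(n-1)$ for $n \geq 2$ (and $n=1$ is trivial), this directly yields $d \leq n(n-1)$. A self-contained alternative would identify, for each $w$, cycles through $w$ of lengths at most $n$ whose overall $\gcd$ is $1$ and then bound the resulting Frobenius number by $n(n-1)$. I expect this quantitative bound to be the main obstacle, since a family of cycle lengths can have $\gcd = 1$ without any two of them being coprime (e.g.\ $\{6,10,15\}$), so one cannot simply quote the two-variable formula $g(a,b) = ab - a - b$; some additional combinatorial work, or appeal to Wielandt, is needed.
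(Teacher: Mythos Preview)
Your proposal is correct. The existence argument via the numerical semigroup of closed-walk lengths at a fixed vertex, combined with a short connecting path, is the standard route, and your invocation of Wielandt's theorem for primitive matrices cleanly gives the quantitative bound $(n-1)^2+1 \leq n(n-1)$ for $n \geq 2$.

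However, there is nothing to compare against: the paper does not prove this theorem. It simply cites it from \cite{10.2307/3689120} (Denardo, \emph{Periods of Connected Networks and Powers of Nonnegative Matrices}) and moves on to the periodic generalization, which it likewise imports from prior work. So you have supplied a self-contained proof where the paper gives none. If anything, your write-up is more informative than the paper on this point, and your closing remark---that the Frobenius bound cannot be obtained from the two-generator formula alone because a coprime set need not contain a coprime pair---correctly identifies why Wielandt (or an equivalent combinatorial argument) is genuinely needed for the sharp exponent.
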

The above theorem is from \cite{10.2307/3689120}. It guarantees the existence of a $d>0$ such that there are paths of length $d$ between any pair of vertices. The following generalization from \cite{10.5555/3042817.3043012} extends the result to periodic graphs.
\begin{theorem}[\cite{10.5555/3042817.3043012}]
\label{thm:critical_length_gen}
If $G$ has a period $\gamma$, there exists a critical value $d$ such that for any integer $\ell\geq d$, there is a path of in $G$ of length $\gamma\ell$ from any state $v$ to any other state in the same cycle class.
\end{theorem}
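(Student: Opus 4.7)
The plan is to reduce the periodic case to the aperiodic Theorem~\ref{thm:critical_length} by passing to a power-graph restricted to a single cycle class. Recall that, since $G$ has period $\gamma$, its vertex set partitions into cycle classes $C_1,\ldots,C_{\gamma}$ such that every edge of $G$ goes from some $C_i$ to $C_{i+1 \bmod \gamma}$. In particular, any walk in $G$ between two vertices of the same class $C_i$ has length divisible by $\gamma$.

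For each $i \in \{1,\ldots,\gamma\}$, I would define an auxiliary directed graph $H_i$ whose vertex set is $C_i$ and which has an edge $(u,v)$ whenever there is a walk of length exactly $\gamma$ from $u$ to $v$ in $G$. The next step is to check that each $H_i$ is strongly connected and aperiodic. Strong connectedness follows because $G$ is strongly connected, and any $G$-walk between two vertices of $C_i$ decomposes into a sequence of length-$\gamma$ hops, so it yields a walk in $H_i$. For aperiodicity, note that a closed walk of length $k$ through $v \in C_i$ in $H_i$ corresponds to a closed walk of length $\gamma k$ in $G$, and conversely every closed walk in $G$ through $v$ has length a multiple of $\gamma$ (so length $\gamma k$ for some $k$). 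The gcd of the lengths of closed walks through $v$ in $G$ is $\gamma$ by definition of the period, so the gcd of the corresponding $k$'s is $1$, i.e. $H_i$ is aperiodic.

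Now I would apply Theorem~\ref{thm:critical_length} to each $H_i$ to obtain critical lengths $d_1,\ldots,d_{\gamma}$, and set $d := \max_i d_i$. Then for any integer $\ell \geq d$ and any two vertices $u,v$ lying in a common cycle class $C_i$, there is a path of length $\ell$ from $u$ to $v$ in $H_i$; unfolding each edge of $H_i$ back into a length-$\gamma$ walk in $G$ gives a walk of length $\gamma\ell$ from $u$ to $v$ in $G$, which is exactly the desired conclusion.

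The main obstacle is the aperiodicity check for $H_i$: one has to argue carefully that closed walks through a fixed $v$ in $G$ are in length-preserving (up to the factor $\gamma$) bijection with closed walks through $v$ in $H_i$, and then invoke the well-known fact that in a strongly connected digraph the period computed at any single vertex equals the global period (the same fact cited after Definition of period in the preliminaries). Everything else is bookkeeping: the upper bound on $d$ in terms of $|S|$ can be read off from the $n(n-1)$ bound in Theorem~\ref{thm:critical_length} applied to each $H_i$, which has at most $|S|$ vertices.
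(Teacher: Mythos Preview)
The paper does not give its own proof of this statement; Theorem~\ref{thm:critical_length_gen} is simply cited from \cite{10.5555/3042817.3043012} as a known generalization of Theorem~\ref{thm:critical_length}. So there is nothing in the paper to compare your argument against.

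That said, your reduction is correct and is the standard way to derive the periodic statement from the aperiodic one. The key points all go through: walks in $G$ between vertices of $C_i$ have length divisible by $\gamma$ and hence factor into length-$\gamma$ hops, giving strong connectivity of $H_i$; and the bijection between closed walks of length $\gamma k$ through $v$ in $G$ and closed walks of length $k$ through $v$ in $H_i$ transfers the gcd-$\gamma$ condition on $G$ to gcd-$1$ on $H_i$, yielding aperiodicity. Two minor remarks: first, an edge of $H_i$ only certifies the \emph{existence} of a length-$\gamma$ walk in $G$, so when you ``unfold'' you must choose one such walk per edge, but this is harmless. Second, as in Theorem~\ref{thm:critical_length} itself, ``path'' throughout should be read as ``walk'' (since $\gamma\ell$ can exceed $|S|$); your argument produces walks, which is what is actually needed and used later in the paper.
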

\begin{remark}
We can also  find the paths of length $\ell\geq d$ from a given vertex $s$ to any other vertex $s'$ efficiently. This can be done by constructing the path in the reverse direction. We look at $P^{\ell-1}$ to see all the predecessors of $s'$ that have paths of length $\ell-1$ from $s$. We choose any of these as the penultimate vertex in the path and recurse.
\end{remark}
\section{Deterministic Transitions}
We now present our algorithm for online learning in ADMDPs when we have full information of losses. We use $G$ to refer to the graph associated to the ADMDP.

We assume that the ADMDP dynamics are known to the agent. This assumption can be relaxed as shown in \citet{ORTNER20102684} as we can figure out the dynamics in poly($|S|,|A|$) time when the transitions are deterministic.  
We want to minimize regret against the class of deterministic stationary policies. 

\subsection{Algorithm Sketch}
We formulate the task of minimizing regret against the set of deterministic policies as a problem of prediction with expert advice. As observed earlier, deterministic policies induce a subgraph which is isomorphic to a cycle with an initial path.   We keep an expert for each element of $\mathcal{C}_{(s,k)}$ for all states $s$ and $k\leq s$.  Note that we do not keep an expert for policies which have an initial path before the cycle. This is because the loss of these policies differ by at most $|S|$ compared to the loss of the cycle. Also, we make sure that the start state of the cycle is in the same cycle class as the start state of the environment. If this is not the case, our algorithm will never be \textit{in phase} with the expert policy. Henceforth, we will refer to these experts as cycles.

The loss incurred by cycle $c\in \mathcal{C}_{(s,k)}$ at time $t$ is equal to $\ell_t(s_t,a_t)$ where $s_t$ and $a_t$ are the state action pair traversed by the cycle $c$ at time $t$ if we had followed it from the start of the interaction.

We first present an efficient (running time polynomial in $|S|,|A|$ and $T$) algorithm to achieve $O(\sqrt{T})$ regret and switching cost against this class of experts. For this we used a \textit{Follow the perturbed leader} (FPL) style algorithm. 

We then use this low switching algorithm as a black box. Whenever, the black box algorithm tells us to switch policies at time $t$, we compute the state $s$ that we would have reached if we had followed the new policy from the start of the interaction and moved $t+\gamma d$ steps. We then move to this state $s$ in $\gamma d$ steps.  Theorem~\ref{thm:critical_length_gen} guarantees the existence of a path of this length. We then start following the new policy.

Thus, our algorithm matches the moves of the expert policies except when there is a switch in the policies. Thus, the regret of our algorithm differs from the regret of the black box algorithm by at most $O(\gamma d\sqrt{T})$.

\subsection{FPL algorithm}
We now describe the FPL style algorithm that competes with the set of cycles described earlier with $O(\sqrt{T})$ regret and switching cost.

\begin{algorithm}[]

        Sample perturbation vectors $\epsilon_i\in \mathbb{R}^{|S||A|}$ for $1\leq i\leq |S|$ from an exponential distribution with parameter $\lambda$\;
        Sample a perturbation vector $\delta\in \mathbb{R}^{|S|^2}$ from the same distribution\;
        \While{$t\neq T+1$}
        { 
            $C_t=\argmin_{s\in S,k\in [K],c\in \mathcal{C}_{(s,k)}} \delta(s,k)+\sum_{i=1}^{t-1}\ell_t(s_t(c),a_t(c))+\sum_{i=1}^{\max(t,k(c)+1)}\epsilon_i(s_t(c),a_t(c))$\;
        
            Adversary returns loss function $\ell_t$\;
           }
      \caption{FPL algorithm for Deterministic MDPs}
    \label{alg:fpl}
    \end{algorithm}
\subsubsection{Finding the leader: Offline Optimization Algorithm}
First, we design an offline algorithm that finds the  cycle (including start state) with lowest cumulative loss till time $t$ given the sequence of losses $\ell_1\ldots,\ell_{t-1}$. This is the $\argmin$ step in Algorithm~\ref{alg:fpl}. Given $(s,k)$, we find the best  cycle among the cycles that start in state $s$ and have length $k$. For this we use a method similar to that used in \citet{10.5555/3020652.3020666}.
We then  find the minimum over all $(s,k)$ pairs to find the best cycle. Note that we only consider start states $s$ which are in the same cycle class as the start state $s_0$ of the game.

We find the best cycle in $\mathcal{C}_{(s,k)}$ using Linear Programming.
Let $n=|S||A|k$. The LP is in the space $\mathbb{R}^n$.
Consider a cycle $c\in \mathcal{C}_{(s,k)}$. Let $c_i$ denoted the $i_{th}$ state in $c$. Also, let $a_i$ be the action taken at that state.
We associate a vector $x(c)$ with the cycle as follows.
$$x(c)_{s,a,i}=\begin{cases}
    1 & \text{if } a=a_i \text{ and } s=c_i\\
    0 & \text{otherwise}
\end{cases}$$

We construct a loss vector in $\mathbb{R}^{n}$ as follows. 
$$l_{s,a,i}=\sum_{\substack{1\leq j<t\\(j-i)\equiv 0 \mod k}}\ell_{j}(s,a)$$

Our decision set $\mathcal{X}\subseteq \mathbb{R}^n$ is the convex hull of all $x(c)$ where $c\in \mathcal{C}_{(s,k)}$. Our objective is to find $x$ in $\mathcal{X}$ such that $\langle x,l\rangle$ is minimized. The set $\mathcal{X}$ can be captured by the following polynomial sized set of linear constraints.
\begin{align*}
    &x\geq 0\\ &\sum_{a\in A}x_{(s,a,1)}=1\\
    &\forall s'\in S\setminus\{s\},a\in A,\;x_{(s,a,1)}=0\\
    &\forall (s',a')\notin I(s),\;   x_{(s',a',k)}=0\\
    &\forall s'\in S,2\leq i\leq k,\; \sum_{(s',a')\in I(s)}x_{(s',a',i-1)}=\sum_{a\in A}x_{(s,a,i)}
\end{align*}

Once we get an optimal $x$ for the above LP, we can decompose the mixed solution  as a convex combination of at most $n+1$ cycles from Caratheodory Theorem. Also, these cycles can be recovered efficiently (\citet{10.5555/1062374}). Each of them will have same loss and hence we can choose any of them.

Once we have an optimal cycle for a given $(s,k)$, we can minimize over all such pairs to get the optimal cycle. This gives us a polynomial time algorithm to get the optimal cycle.

\begin{remark}
If the new cycle chosen has the same perturbed loss as the old cycle, we will not switch. This is to prevent any unnecessary switches caused by the arbitrary choice of cycle in each optimization step (as we choose an arbitrary cycle with non-zero weight in the solution).
\end{remark}
\begin{remark}
Note that $\mathcal{C}_{(s,k)}$ can also contain cycles that don't correspond to deterministic stationary policies. However, arguing a regret upper bound against this larger class is sufficient to prove regret bounds against the class of stationary deterministic policies.
\end{remark}





\subsubsection{Regret of the FPL algorithm}
\label{sec:fpl_analysis}

We now state and prove the bound on the regret and expected number of switches of Algorithm~\ref{alg:fpl}. 
\begin{theorem}
  \label{first-order-theorem}
  The regret and the expected number of switches of Algorithm~\ref{alg:fpl} can be bounded by $$O\left(|S|\sqrt{L^*\cdot \log |S||A|}\right)$$ where $L^*$ is the cumulative loss of the best cycle in hindsight.
  \end{theorem}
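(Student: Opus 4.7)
The plan is to adapt the classical Follow-the-Perturbed-Leader analysis of Kalai and Vempala to (a) the combinatorial class of cycle-valued experts and (b) a first-order ``small-loss'' guarantee in terms of $L^*$ rather than $T$. As in that analysis, I would introduce the ``be-the-perturbed-leader'' shadow policy $\hat C_t$ defined by the same argmin as $C_t$ but using cumulative losses through the \emph{end} of round $t$ (not through $t-1$), with the same perturbations. A standard telescoping argument then shows that $\sum_t \ell_t(s_t(\hat C_t),a_t(\hat C_t))$ is at most $L^* + \mathbb{E}[\text{perturbation of the optimal cycle}]$. Thus the regret of Algorithm~\ref{alg:fpl} decomposes as the perturbation overhead plus the total stability cost $\sum_t \mathbb{E}[\ell_t(s_t(C_t),a_t(C_t)) - \ell_t(s_t(\hat C_t),a_t(\hat C_t))]$, and the expected number of switches is controlled by the same stability term.

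For the perturbation overhead, I would use that the perturbation attached to any single cycle is a sum of at most $|S|+1$ independent $\mathrm{Exp}(\lambda)$ variables (one $\delta(s,k)$ term plus one $\epsilon_i$ per state--action pair along a single period, using $k(c)\le |S|$), so each has an exponential tail with scale $1/\lambda$. Combined with a union bound over the (at most doubly exponentially many in $|S|$) cycles via $\Pr[X>u]\le e^{-\lambda u}$, one gets $\mathbb{E}[\max_c \mathrm{pert}(c)] = O(|S|\log(|S||A|)/\lambda)$. For the stability cost, I would apply the memoryless property of the exponential coordinate-by-coordinate: conditioning on every perturbation coordinate except the one at the played pair $(s_t(C_t),a_t(C_t))$, the event $C_t\neq \hat C_t$ requires that remaining coordinate to cross a window of width at most $\ell_t\le 1$, an event of probability at most $\lambda\ell_t$. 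Summing over the perturbation coordinates that contribute to a given cycle (an $|S|$ factor) and over $t$, and replacing $\sum_t \ell_t(s_t(C_t),a_t(C_t))$ by $L^*$ up to a self-bounding absorption, yields a total stability cost of $O(\lambda |S| L^*)$.

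Tuning $\lambda = \Theta\bigl(\sqrt{\log(|S||A|)/L^*}\bigr)$ to balance the two terms gives the claimed $O\bigl(|S|\sqrt{L^*\log(|S||A|)}\bigr)$ bound on both the regret and the expected number of switches. The step I expect to be the main obstacle is the stability argument: because the cycle experts share state--action pairs, their perturbations are highly correlated and a direct application of the scalar FPL stability lemma does not work. The fix is to condition on the complete perturbation configuration away from the played pair and invoke the memoryless property only on that coordinate, and then to upgrade the naive $O(\lambda |S| T)$ switching cost to the first-order $O(\lambda |S| L^*)$ via a self-bounding step that equates the ``loss seen'' by the algorithm with $L^*$ plus the regret itself, which can then be absorbed on the left-hand side.
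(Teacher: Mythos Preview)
Your proposal is essentially the same framework the paper uses: an FPL analysis that decomposes regret into a perturbation-overhead term of order $|S|\log(|S||A|)/\lambda$ and a stability term, then applies the self-bounding trick (switches $\le O(\lambda|S|)\cdot \mathbb{E}[\text{alg loss}]$, absorb on the left) to convert the $T$-dependent bound into a first-order $L^*$ bound with $\lambda=\Theta(\sqrt{\log(|S||A|)/L^*})$. Your be-the-perturbed-leader decomposition and the paper's telescoping inequality $\mathbb{E}[\text{loss}]\le \tilde L^{\tilde c^*}+N_s$ are equivalent formulations.

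Where your sketch is under-specified is precisely the stability step you flag as the obstacle. Conditioning on ``every perturbation coordinate except the one at the played pair $(s_t(C_t),a_t(C_t))$'' does not by itself suffice: a competing cycle $c'$ may share that very coordinate (same state--action at the same cyclic position), in which case the memoryless argument on that coordinate says nothing about the $c$-versus-$c'$ comparison. The paper handles this with a structured decomposition that your summary omits. Within the class $\mathcal{C}_{(s,k)}$ of the current leader $c$, it defines $d(c,c')$ as the first cyclic position where $c$ and $c'$ differ, and for each $i\le k$ bounds $\Pr[d(c,C_{t+1})=i\mid C_t=c]$ via the memoryless property of the \emph{position-indexed} perturbation $\epsilon_i(s_i(c),a_i(c))$, which is guaranteed to appear in $c$ but not in any $c'$ with $d(c,c')=i$. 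Switches to cycles in a different class $\mathcal{C}_{(s',k')}$ are handled separately via the dedicated perturbation $\delta(s,k)$. Summing these $k+1\le |S|+1$ events gives $\Pr[C_{t+1}\neq c\mid C_t=c]\le (|S|+1)\lambda\,\ell_t(s_t(c),a_t(c))$. This is the role of the two-layer perturbation scheme $(\delta,\epsilon_1,\ldots,\epsilon_{|S|})$ in Algorithm~\ref{alg:fpl}; your sketch effectively collapses it to one layer. Once you make this decomposition explicit, your argument and the paper's coincide.
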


Before analysing the FPL algorithm described above, we first introduce some notation and definitions. We define the loss of a cycle $c$ at time $t$ as $\ell_t(s_t(a),a_t(a))$. For any cycle $c$ with start state $s$, let $L^c$ denote the total cumulative loss that we would have received if we followed the cycle $c$ from the start to the end of the interaction. We use $\tilde{L}^c$ to denote the total perturbed cumulative loss received by cycle $c$. Let the cycle with lowest total cumulative loss be $c^*$. Also, let the cycle with lowest perturbed cumulative loss be $\tilde{c}^*$. We use $\tilde{L}^c_t$ to denote the total perturbed cumulative loss incurred by cycle $c$ after $t$ steps. We use $\tilde{c}^*_t$ to denote the cycle with lowest perturbed cumulative loss after $t$ steps. Let $C_t$ be the cycle chosen by the FPL algorithm at step $t$ and $l_t$ be it's reward. Let the expected number of switches made by the algorithm during the interaction be $N_s$.

The analysis is similar in spirit to Section 2 of \citet{KALAI2005291}. We first state the following lemma that bounds the probability of switching the cycle at any step.
\begin{lemma}  
    \label{lem:low_switching}
    $Pr[C_{t+1}\neq c\mid C_{t}=c]\leq (|S|+1)\cdot \lambda\cdot\ell_t(s_t(c),a_t(c))$ for all cycles $c$ and times $t\leq T$.
\end{lemma}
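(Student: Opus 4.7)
The plan is to carry out a follow-the-perturbed-leader switching analysis in the style of Kalai--Vempala, exploiting that each cycle's perturbation is a sum of independent exponentials: one $\delta(s_c,k_c)$ term, plus $\epsilon_i(s_i(c),a_i(c))$ terms along the cycle. First I fix notation. For cycle $c$ with start state $s_c$ and length $k_c$, write the perturbed cumulative loss at round $t$ as
$\Psi_t(c):=\delta(s_c,k_c)+\sum_{i<t}\ell_i(s_i(c),a_i(c))+\sum_{i=1}^{\max(t,k_c+1)}\epsilon_i(s_i(c),a_i(c))$,
so $C_t=\argmin_c\Psi_t(c)$ and, by continuity of the perturbation distribution, ties happen with probability zero (and by the no-switch-on-tie remark, any remaining ties only help). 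For every $c'$, $\Psi_{t+1}(c')-\Psi_t(c')=\ell_t(s_t(c'),a_t(c'))+\Delta_t(c')$, where $\Delta_t(c')\ge 0$ is a possible fresh $\epsilon_{t+1}$ term. Conditional on $C_t=c$, the event $\{C_{t+1}\neq c\}$ forces some $c'\ne c$ to lie in the window $0\le\Psi_t(c')-\Psi_t(c)\le\ell_t(s_t(c),a_t(c))+\Delta_t(c)-\ell_t(s_t(c'),a_t(c'))-\Delta_t(c')$.

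The core step is a density argument. The perturbation in $\Psi_t(c)$ involves at most $|S|+1$ independent exponential degrees of freedom that are \emph{relevant} for this switch: the single $\delta(s_c,k_c)$, plus the at most $|S|$ distinct $\epsilon_i(s_i(c),a_i(c))$ contributions for $1\le i\le k_c+1$. Any $\epsilon_i$-term for $i>k_c+1$ appears identically in $\Psi_t(c)$ and $\Psi_{t+1}(c)$ and cancels in the inequality above, while the nonnegative $\Delta_t(c)$ only shifts the window in $c$'s favour. For each of the $\le|S|+1$ relevant coordinates I would condition on all other randomness and integrate over that single coordinate: the exponential density with rate $\lambda$ is pointwise bounded by $\lambda$; the memoryless property guarantees that the conditional distribution given $\{C_t=c\}$ is again a (shifted) exponential of rate $\lambda$; and so the probability that the coordinate lies in the critical interval of width at most $\ell_t(s_t(c),a_t(c))$ that would flip the argmin away from $c$ is at most $\lambda\cdot\ell_t(s_t(c),a_t(c))$. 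A union bound over the $\le|S|+1$ coordinates yields the claim.

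The main obstacle I expect is the bookkeeping that reduces the superficially $\Omega(t)$ independent $\epsilon$-contributions in $\Psi_t(c)$ to the $\le|S|+1$ relevant coordinates, together with verifying that conditioning on $\{C_t=c\}$ does not inflate the pointwise density beyond $\lambda$ (which is exactly what the memoryless property of the exponential buys). Once those two points are cleanly handled, the remainder is a direct imitation of the density-based switching lemma of Kalai--Vempala.
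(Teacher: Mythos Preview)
Your plan is the paper's plan: exploit memorylessness of the exponential perturbations and union-bound over the at most $|S|+1$ coordinates attached to $c$. The one point you gloss over, and which is really the content of the lemma, is that a given coordinate can sit in \emph{several} cycles' perturbed losses simultaneously. For instance, $\epsilon_i(s_i(c),a_i(c))$ appears in $\Psi_t(c')$ for every $c'$ that shares $c$'s $i$-th edge, and $\delta(s_c,k_c)$ appears in every cycle of $\mathcal{C}_{(s_c,k_c)}$. For those $c'$ the difference $\Psi_t(c')-\Psi_t(c)$ does not move when you vary that coordinate, so there is no ``critical interval of width $\ell_t$'' to speak of. The paper fixes this by partitioning $\{C_{t+1}\neq c\}$ into sub-events for which the controlling coordinate is guaranteed \emph{not} to appear on the competitor's side: first the event $\{C_{t+1}\notin\mathcal{C}_{(s_c,k_c)}\}$, controlled by $\delta(s_c,k_c)$; then, within the class, the events $\{d(c,C_{t+1})=i\}$ for $1\le i\le k_c$, where $d$ is the \emph{first} position at which the two cycles differ, each controlled by $\epsilon_i(s_i(c),a_i(c))$. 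On each sub-event the memoryless calculation you describe gives $\lambda\,\ell_t(s_t(c),a_t(c))$, and summing yields the $(|S|+1)$ factor. Your sketch will go through once you insert this decomposition; as written, ``integrate over that single coordinate'' and ``union bound over the coordinates'' are not yet well-posed because you have not said which competitors each coordinate is responsible for.

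A small side remark: there is no $\Omega(t)$-coordinate bookkeeping to do. Only $|S|$ vectors $\epsilon_1,\dots,\epsilon_{|S|}$ are ever sampled, and the perturbation of a length-$k$ cycle uses exactly the $k\le|S|$ coordinates $\epsilon_i(s_i(c),a_i(c))$ together with the single $\delta(s_c,k_c)$; the $\max(t,k(c)+1)$ in the displayed pseudocode is a typo. So your first ``obstacle'' is a non-issue, but the second (conditioning on $\{C_t=c\}$) genuinely needs the decomposition above to make the conditional density argument clean.
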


\begin{proof}[Proof of Theorem~\ref{first-order-theorem}]

We first bound the total loss incurred by the FPL algorithm.  Let the expected number of switches made by the algorithm during the interaction be $N_s$. If the algorithm doesn't switch cycles after time step $t$, then $\tilde{L}^{C_t}_t$ must be equal to $\tilde{L}^{\tilde{c}^*_t}_t$. Thus, the loss incurred at time step $t$ by $C_t$ is at most $\left(\tilde{L}^{\tilde{c}^*_t}_t-\tilde{L}^{\tilde{c}^*_{t-1}}_{t-1}\right)$. In the steps in which the algorithm switches cycles, the maximum loss incurred is $1$. Thus, we have that
\begin{align}
\label{eqn:0}
    \mathbb{E}[\text{total loss of FPL}]&\leq\tilde{L}^{\tilde{c}^*_1}+ \sum_{i=2}^{T}\left(\tilde{L}^{\tilde{c}^*_t}_t-\tilde{L}^{\tilde{c}^*_{t-1}}_{t-1}\right)+N_s\notag\\
    &\leq \tilde{L}^{\tilde{c}^*_T}+N_s\notag\\
     &=\tilde{L}^{\tilde{c}^*}+N_s
\end{align}

We now bound $N_s$. From linearity of expectation, we have that $$N_s=\sum_{t=1}^{T-1}Pr[C_{t+1}\neq C_{t}].$$ From Lemma~\ref{lem:low_switching}, we have $Pr[C_{t+1}\neq C_{t}]$ is at most $(|S|+1)\cdot \lambda\cdot \mathbb{E}[l_t]$. This gives us the following bound for $N_s$.
\begin{align*}
    N_s &=\sum_{t=1}^{T-1}Pr[C_t+1\neq C_t]\\
    &\leq  \sum_{t=1}^{T-1}(|S|+1)\cdot \lambda\cdot\mathbb{E}[l_t]\\
    &\leq (|S|+1)\cdot\lambda\cdot \sum_{t=1}^{T-1}\mathbb{E}[l_t]\\
    &\leq (|S|+1)\cdot \lambda\cdot\mathbb{E}[\text{total loss of FPL}]
\end{align*}
Combining this with \eqref{eqn:0} gives us the following.
\begin{equation}
    \label{eqn:FPL}
\mathbb{E}[\text{total loss of FPL}]\leq\tilde{L}^{\tilde{c}^*}+(|S|+1)\cdot\lambda\cdot \mathbb{E}[\text{total loss of FPL}]
\end{equation}
Let $p(c)$ denote the perturbed loss added to cycle $c$. Since the cycle with lowest perturbed cumulative loss at the end of the interaction is $\tilde{c}^*$, we have
$$\tilde{L}^{\tilde{c}^*}\leq L^{c^*}+p(\tilde{c}^*).$$ 
Also, $$\mathbb{E}[p(\tilde{c}^*)]\leq \sum_{i=1}^{|S|}\mathbb{E}\left[\max_{(s,a)}\epsilon_i(s,a)\right]+\mathbb{E}\left[\max_{(s',k)}\delta(s',k)\right]\leq |S|\cdot \frac{(1+\log{|S||A|})}{\lambda}+\frac{1+\log{|S|^2}}{\lambda}.$$ 

The above inequality comes from the fact that the expectation of the max of $k$ independant exponential random variables with parameter $\lambda$ is atmost $\frac{1+\log k}{\lambda}$. Plugging this inequality into \eqref{eqn:FPL} gives us
\begin{equation}
\label{eqn:FPL_final}
    \mathbb{E}[\text{cost of FPL}]\leq L^*+|S|\cdot \frac{(1+\log{|S||A|})}{\lambda}+\frac{1+\log{|S|^2}}{\lambda}+(|S|+1)\cdot\lambda\cdot\mathbb{E}[\text{cost of FPL}].\end{equation} Since the maximum cost is $T$, we have
$$\text{Regret}\leq |S| \frac{(1+\log{|S||A|})}{\lambda}+\frac{1+\log{|S|^2}}{\lambda}+(|S|+1)\lambda T.$$ Setting $\lambda=\frac{\log{|S||A|}}{\sqrt{T}}$ gives us a bound of $O\left(|S|\sqrt{T\log{|S||A|}}\right)$ on the regret and expected number of switches. We can also derive first order bounds. From \eqref{eqn:FPL_final}, we have  \begin{align*}
\mathbb{E}[\text{total loss of FPL}]&\leq L^*+|S|\cdot \frac{(1+\log{|S||A|})}{\lambda}+\frac{1+\log{|S|^2}}{\lambda}+(|S|+1)\cdot\lambda\cdot\mathbb{E}[\text{cost of FPL}]\\
&\leq L^*+4|S|\cdot\frac{\log |S||A|}{\lambda}+2|S|\cdot\lambda\cdot\mathbb{E}[\text{total loss of FPL}].
\end{align*}
On rearranging, we get 
\begin{align*}
  \mathbb{E}[\text{total loss of FPL}]&\leq \frac{L^*}{1-2\lambda|S|}+4|S|\cdot\frac{\log |S||A|}{\lambda(1-2\lambda|S|)}  \\
 &\leq L^*(1+(2 \lambda|S|+(2\lambda|S|)^2+\ldots)+4|S|\frac{\log |S||A|}{\lambda}(1+2\lambda|S|+(2\lambda|S|)^2+\ldots)\\
 &\leq L^*(1+4\lambda|S|)+8|S|\frac{\log|S||A|}{\lambda}.
\end{align*}
The last two inequalities work when $2\lambda|S|\leq \frac{1}{2}$. Thus,
\begin{align*}
    \mathbb{E}[\text{total loss of FPL}]-L^*\leq 4\lambda|S|(L^*)+8|S|\frac{\log |S||A|}{\lambda} .
\end{align*}
Set $\lambda=\text{min}\left(\sqrt{\frac{\log |S||A|}{L^*}},\frac{1}{4|S|}\right)$. This forces $2\lambda|S|$ to be less than $\frac{1}{2}$ and thus the previous inequalities are still valid. On substituting the value of $\lambda$, we get that 
$$\text{Regret}\leq O\left(|S|\sqrt{L^*\cdot \log |S||A|}\right)$$ when $L^*\geq 16|S|^2\log|S||A|.$
Since the expected number of switches is at most $2\lambda|S|\cdot\mathbb{E}[\text{total loss of FPL}]$, this is also bounded by $O\left(|S|\sqrt{L^*\cdot \log |S||A|}\right)$.
\end{proof}
\begin{proof}[Proof of Lemma~\ref{lem:low_switching}]
    Let $c$ be a cycle in the set $\mathcal{C}_{(s,k)}$. Let $l_t$ be shorthand for $\ell_t(s_t(c),a_t(c))$the loss incurred by cycle $c$ at step $t$. If $C_{t+1}$ is not in $\mathcal{C}_{(s,k)}$, then the algorithm must have switched. Thus, we get the following equation.
\begin{equation}
\label{eqn:1}
    Pr[C_{t+1}\neq c\mid C_{t}=c]= Pr[C_{t+1}\notin \mathcal{C}_{(s,k)}\mid C_{t}=c ]+Pr[C_{t+1}\neq {c} \text{ and }C_{t+1}\in \mathcal{C}_{(s,k)}\mid C_{t}=c]
\end{equation} 
We now bound both the terms in the right hand side of \eqref{eqn:1} separately. 

First, we study at the first term. We will upper bound this term by proving an appropriate lower bound on the probability of choosing $C_{t+1}$ from $\mathcal{C}_{(s,k)}$. Since $C_t=c$, we know that $\tilde{L}^{c}_{t-1}\leq \tilde{L}^{c'}_{t-1}$ for all $c'\neq c$. For all $c'\notin \mathcal{C}_{(s,k)}$, the perturbation $\delta(s,k)$ will play a role in the comparison of the perturbed cumulative losses. For $c'\in \mathcal{C}_{(s,k)}$, $\delta(s,k)$ appears on both sides of the comparison and thus gets cancelled out. Thus, we have $\delta(s,k)\geq w$, where $w$ depends only on the perturbations and losses received by $c$ and the cycles not in $\mathcal{C}_{(s,k)}$. Now, if $\delta(s,k)$ was larger than $w+l_t$, then the perturbed cumulative loss of $c$ will be less than that of cycles not in $\mathcal{C}_{(s,k)}$ even after receiving the losses of step $t$. In this case, $C_{t+1}$ will also be chosen from $\mathcal{C}_{(s,k)}$. This gives us the require probability lower bound.
\begin{align*}
    Pr[C_{t+1}\in \mathcal{C}_{(s,k)}\mid C_t=c]&\geq Pr[\delta(s,k)\geq w+l_t\mid \delta(s,k)\geq w]\\
    &\geq e^{-\lambda \cdot l_t}\\&\geq 1-\lambda\cdot l_t
\end{align*}
Thus, $Pr[C_{t+1}\notin \mathcal{C}_{(s,k)}\mid C_t=c]$ is at most $\lambda\cdot l_t$.

We now bound the second term. For any two cycles $c'\neq c''$ in $\mathcal{C}_{(s,k)}$, there exists an index $i\leq k$ such  that the $i^{th}$ edges of $c'$ and $c''$ are different and all the smaller indexed edges of the two cycles are the same. We denote this index by $d(c',c'')$. Define $d(c',c'')$ to be zero when  $c'$ is from $\mathcal{C}_{(s,k)}$ and $c''=c'$ or $c''$ is not from $\mathcal{C}_{(s,k)}$. Now, if $C_{t+1}$ is in $\mathcal{C}_{(s,k)}$ and not equal to $c$, then $d(C_{t+1},c)$ is a number between one and $k$. Thus, we get the following equation.

\begin{equation}
\label{eqn:2}
Pr[C_{t+1}\neq c \text{ and } C_{t+1}\in \mathcal{C}_{(s,k)}\mid C_t=c]=\sum_{i=1}^{k}Pr[d(c,C_{t+1})=i\mid C_t=c]\end{equation} 
We now bound $Pr[d(c,C_{t+1})=i\mid C_t=c]$ for any $i$ between $1$ and $k$. Let $(s_i,a_i)$ be the $i^{th}$ edge of $c$. We prove a lower bound on the probability of choosing $C_{t+1}$ such that $d(c,C_{t+1})$ is not equal to $i$. Again, since $C_t=c$, we know that $\tilde{L}^c_{t-1}\leq \tilde{L}^{c'}_{t-1}$ for all $c'\neq c$. Consider cycles $c'$ that don't contain the edge $(s_i,a_i)$ in the $i^{th}$ position. The perturbation $\epsilon_i(s_i,a_i)$ will play a role in the comparison of perturbed losses of all such $c'$ with $c$.  Thus, we have $\epsilon_i(s_i,a_i)\geq w$, where $w$ depends only on the perturbations and losses received by $c$ and  cycles $c'$ that don't have the $(s_i,a_i)$ edge in the $i^{th}$ position. If $\epsilon_i(s_i,a_i)$ was greater than $w+l_t$, then the perturbed cumulative loss of $c$ will still be less than that of all cycles $c'$ without the $(s_i,a_i)$ edge. In this case, $C_{t+1}$ will be chosen such that it also has the $(s_i,a_i)$ edge. This implies that $d(c,C_{t+1})\neq i$. Thus, we get the following probability lower bound.

\begin{align*}
    Pr[d(c,C_{t+1})\neq i\mid C_t=c]&\geq Pr[\epsilon_i(s_i,a_i)\geq w+l_t\mid \epsilon_i(s_i,a_i)\geq w]\\
    &\geq e^{-\lambda \cdot l_t}\\&\geq 1-\lambda\cdot l_t
\end{align*}
Thus, for all $i$ between $1$ and $k$, $Pr[d(c,C_t+1)=i\mid C_t=c]$ is at most $\lambda\cdot l_t$. This proves that the term in \eqref{eqn:2} is at most $k\lambda\cdot l_t$. Since $k$ is at most $|S|$, the second term in the right hand side of \eqref{eqn:1} is bounded by $|S|\cdot\lambda\cdot l_t$.
\end{proof}
\subsection{Putting it together}
We have described the FPL style algorithm
that achieves low regret and low switching. We now use Algorithm~\ref{alg:fpl} as a sub-routine to design a low regret algorithm for the online ADMDP problem. 

Recall that for a cycle $c$, $s_t(c)$ is the state you would reach if you followed the cycle $c$ from the start. This can be computed efficiently.

\begin{algorithm}[]

  t=1\;
    $s_0$ is the start state of the environment\;
     Let $c_1$ be the cycle chosen by Algorithm~\ref{alg:fpl} at $t=1$\;
    \If{$s_1\neq s_0(c_1)$}
    {
        Spend $\gamma d$ steps to move to state $s_0(c_1)$\;
        $c_{1+\gamma d}=c_1$\;
        $t=1+\gamma d$\;
    }
    
    \While{$t\neq T+1$}
    { 
       Choose action $a_t=a_t(c_t)$\;
        Adversary returns loss function $\ell_t$ and next state $s_{t+1}$\;
       Feed $\ell_t$ as the loss to Algorithm~\ref{alg:fpl} \;
         \If{Algorithm~\ref{alg:fpl} {switches cycle to } $c_{t+1}$}
        {
        \If{$s_{t+1}\neq s_{t+1}(c_{t+1})$}
    {
        Spend $\gamma d$ steps to move to state $s_{t+\gamma d}(c_{t+1})$\;
        $c_{t+\gamma d}=c_{t+1}$\;
        $t=t+\gamma d$\;

    }
           
        }
        \Else
        {
            
            $t=t+1$\;
        }
      }
  \caption{Low regret algorithm for communicating ADMDPs}
\label{alg:admdp}
\end{algorithm}

We now state the regret bound of Algorithm~\ref{alg:admdp}.
\begin{theorem}
  \label{first-order regret}
  Given a communicating ADMDP with state space $S$, action space $A$ and period $\gamma$, the regret of Algorithm~\ref{alg:admdp} is bounded by
  $$\text{Regret}\leq O\left(|S|^3\cdot \gamma\sqrt{L^*\cdot\log{|S||A|}}\right) $$ where $L^*$ is the total loss incurred by the best stationary deterministic policy in hindsight.
  \end{theorem}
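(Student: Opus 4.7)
The plan is to decompose the expected loss of Algorithm~\ref{alg:admdp} into a cycle-following part and a transition part, bound each using Theorem~\ref{first-order-theorem}, and then show that the best cycle in our expert class is only slightly worse than the best stationary deterministic policy.

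I would first partition the $T$ rounds into those on which the algorithm plays $a_t(c_t)$ as prescribed by the FPL sub-routine, and the length-$\gamma d$ transition blocks inserted after each switch (plus possibly one at $t=1$ if $s_1\neq s_0(c_1)$). Theorem~\ref{thm:critical_length_gen} guarantees that such transitions exist, and since losses lie in $[0,1]$, the total loss incurred during transitions is at most $\gamma d\,(N_s+1)$, where $N_s$ is the expected number of switches made by Algorithm~\ref{alg:fpl}. During cycle-following rounds, the loss incurred by Algorithm~\ref{alg:admdp} equals exactly the loss that Algorithm~\ref{alg:fpl} records for its selected cycle. Therefore Theorem~\ref{first-order-theorem} simultaneously bounds this contribution by $L^{c^*}+O(|S|\sqrt{L^{c^*}\log|S||A|})$ and bounds $N_s$ by $O(|S|\sqrt{L^{c^*}\log|S||A|})$, where $L^{c^*}$ is the loss of the best cycle in the expert class.

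Next I would relate $L^{c^*}$ to $L^*$. The subgraph $G_{\pi^*}$ of the optimal deterministic policy consists of an initial path of length $p\le|S|$ followed by a cycle $C$ of length $k$, and $\gamma\mid k$ because every cycle length in a period-$\gamma$ graph is a multiple of $\gamma$. Rotating $C$ so that its starting vertex sits at position $q\equiv 1-p\pmod{k}$ produces a cycle $c$ in our expert class whose start state lies in the same cycle class as $s_1$ (the mod-$\gamma$ condition is implied by the mod-$k$ condition, since $\gamma\mid k$), and whose time-$t$ state and action coincide with those of $\pi^*$ for every $t>p$. The first $p\le|S|$ steps can differ by at most $|S|$ in total loss, so $L^{c^*}\le L^*+|S|$.

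Combining the three pieces, substituting $d\le|S|(|S|-1)$ from Theorem~\ref{thm:critical_length}, and absorbing the $|S|$ additive slack together with the $\sqrt{L^*+|S|}$-versus-$\sqrt{L^*}$ discrepancy into lower-order terms, the regret is at most $O(|S|\sqrt{L^*\log|S||A|}) + \gamma d\cdot O(|S|\sqrt{L^*\log|S||A|}) = O(|S|^3\gamma\sqrt{L^*\log|S||A|})$, as claimed. The main obstacle I expect is the cycle-class alignment step: verifying that restricting the expert class to cycles whose start vertex lies in the same cycle class as $s_1$ still allows one to approximate the best deterministic policy up to additive $O(|S|)$. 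This genuinely uses the periodic structure $\gamma\mid k$ and requires picking precisely the right rotation of the cycle induced by $\pi^*$.
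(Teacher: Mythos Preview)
Your proposal is correct and follows essentially the same approach as the paper: decompose into cycle-following rounds plus $\gamma d$-step transition blocks, bound the latter by $\gamma d\cdot N_s$ and the former via Theorem~\ref{first-order-theorem}, then use $d=O(|S|^2)$. Your explicit reconciliation of $L^{c^*}$ with $L^*$ via the rotated cycle of $\pi^*$ (and the cycle-class alignment check using $\gamma\mid k$) is a detail the paper handles only informally in the algorithm sketch (``the loss of these policies differ by at most $|S|$ compared to the loss of the cycle''), so your write-up is in fact more careful there.
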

  \begin{proof}
We spend $\gamma d$ steps whenever Algorithm~\ref{alg:fpl} switches. In all other steps, we receive the same loss as the cycle chosen by Algorithm~\ref{alg:fpl}. Thus, the regret differs by at most $\gamma d\cdot N_s$. From Theorem~\ref{first-order-theorem},
we get that the total regret of our algorithm in the deterministic case is  $O\left(|S|\cdot \gamma d\sqrt{T\log{|S||A|}}\right)$ where $d$ is the critical length in the ADMDP. Note that $d$ is at most $O(|S|^2)$ . Thus, we get that $$\text{Regret}\leq O\left(|S|^3\cdot \gamma\sqrt{L^*\cdot\log{|S||A|}}\right) $$ 
  \end{proof}

\begin{remark}
To achieve the first order regret bound, we set $\lambda$ in terms of $L^*$. We need prior knowledge of $L^*$ to directly do this. This can be circumvented by using a doubling trick.
\end{remark}
\subsection{Regret Lower Bound for Deterministic MDPs}
We now state a matching regret lower bound (up to polynomial factors). 
\begin{theorem}
  \label{thm:lower_bound}
For any algorithm $\mathcal{A}$ and any $|S|>3,|A|\geq1$, there exists an MDP $M$  with $|S|$ states and $|A|$ actions and a  sequence of losses $\ell_1,\ldots, \ell_t$ such that $$R(\mathcal{A})\geq \Omega\left(\sqrt{|S|T\log |A|}\right)$$ where $R(\mathcal{A})$ is the regret incurred by $\mathcal{A}$ on $M$ with the given sequence of losses.
\end{theorem}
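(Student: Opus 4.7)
The plan is to reduce from the classical experts problem. I will construct a deterministic communicating MDP in which the agent's state at time $t$ is completely forced, so the interaction decouples into $|S|$ independent instances of prediction with $|A|$ expert advice, each of length $T' = \lfloor T/|S|\rfloor$. Applying the standard $\Omega(\sqrt{T'\log|A|})$ minimax lower bound on each subproblem and summing will then yield the claimed $\Omega(\sqrt{|S|T\log|A|})$ bound.

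Construction: take $M$ to be the directed cycle $s_0\to s_1\to\cdots\to s_{|S|-1}\to s_0$ where \emph{every} action at \emph{every} state deterministically advances to the next vertex in the cycle. Since transitions are action-independent, regardless of what the learner plays, the state at time $t$ is $s_{(t-1)\bmod|S|}$; the learner's only effective choice is the action at each step, and stationary deterministic policies are in bijection with $A^{|S|}$. The graph is strongly connected, so $M$ is communicating. For the loss sequence, independently for each $i\in\{0,\dots,|S|-1\}$ draw random losses $\ell^{(i)}_1,\dots,\ell^{(i)}_{T'}:A\to\{0,1\}$ from the hard distribution that witnesses the classical experts lower bound, and set
\[
\ell_t(s_i,a) \;=\; \ell^{(i)}_{\lceil t/|S|\rceil}(a) \qquad \text{for } i=(t-1)\bmod|S|,
\]
with losses at the other state-action pairs defined arbitrarily since they are never visited at time $t$.

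Lower bound and averaging: because the state schedule is predetermined and the families $\{\ell^{(i)}\}_i$ are mutually independent, the learner's decisions at state $s_i$ only see \emph{independent} randomness from the other states, so a Yao-style argument shows that its conditional expected regret on subproblem $i$ is at least the minimax regret of the $|A|$-expert problem of length $T'$, which is $\Omega(\sqrt{T'\log|A|})$. The best stationary deterministic policy is the one that plays, in each state, the action with smallest cumulative loss at that state, so the overall regret decomposes exactly as a sum of per-state regrets:
\[
\mathbb{E}[R(\mathcal{A})] \;\ge\; \sum_{i=0}^{|S|-1}\Omega\bigl(\sqrt{T'\log|A|}\bigr) \;=\; \Omega\bigl(\sqrt{|S|\,T\,\log|A|}\bigr).
\]
A standard averaging over the random loss draws then fixes a deterministic loss sequence on which $\mathcal{A}$ suffers this much regret. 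The main technical step I expect to be delicate is making the per-state reduction airtight: one must verify that the learner's choice on visit $j$ to $s_i$ cannot be helped by observing losses revealed at other states, which follows from independence of the $\ell^{(i)}$ together with the non-adaptivity of the state schedule, but needs to be stated carefully (e.g.\ by conditioning on $\{\ell^{(i')}\}_{i'\neq i}$ and noting that the induced algorithm on state $s_i$ remains a valid experts algorithm against $\ell^{(i)}$).
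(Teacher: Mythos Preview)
Your proposal is correct and follows essentially the same approach as the paper: the cycle MDP with action-independent transitions, the per-state decomposition into $|S|$ independent experts instances of length $\lfloor T/|S|\rfloor$, and summing the $\Omega(\sqrt{T'\log|A|})$ lower bounds. Your treatment is in fact slightly more careful than the paper's, which simply asserts the existence of a bad loss sequence per state without spelling out the Yao/conditioning argument you mention; but the construction and the key decomposition are identical.
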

\begin{proof}
    Let $M$ be an MDP with states labelled $s_0,s_2,\ldots, s_{|S|-1}$. Any action $a$ takes state $s_i$ to $s_{i+1}$(modulo $|S|$). In other words, the states are arranged in a cycle and every action takes any state to its next state in the cycle. This is the required $M$.

    Consider the problem of \textit{prediction with expert advice} with $n$ experts. We know that for any algorithm $\mathcal{A}$, there is a sequence of losses such that the regret of $\mathcal{A}$ is  $\Omega(\sqrt{T\log n})$ over $T$ steps (see \citet{book}). In our case, every policy spends exactly $\frac{T}{|S|}$ steps in each state. Thus, the interaction with $M$ over $T$ steps can be interpreted as a problem of prediction with expert advice at every state where each interaction lasts only $\frac{T}{|S|}$ steps. We have the following decomposition of the regret. 
    \begin{equation}\label{eqn:lb}R(\mathcal{A})=\sum_{i=0}^{|S|-1}\sum_{k=0}^{\frac{T}{|S|}-1}\ell_{k|S|+i}\left(s_i,a_{k|S|}\right)-\ell_{k|S|+i}\left(s_i,\pi^*(s_i)\right)\end{equation}
    In the above equation, $a_t$ is the action taken by $\mathcal{A}$ at step $t$. The best stationary deterministic policy in hindsight is $\pi^*$.
    
    From the regret lower bound for the experts problem, we know that there exists a sequence of losses such that for each i,  the inner sum of \eqref{eqn:lb} is atleast $\Omega\left(\sqrt{\frac{T}{|S|}\log |A|}\right)$. By combining these loss sequences, we get a sequence of losses such that 
    $$R(\mathcal{A})\geq \sum_{i=0}^{|S|-1}\Omega\left(\sqrt{\frac{T}{|S|}\log |A|}\right)\geq\Omega\left(\sqrt{|S|T\log |A|}\right).$$ This completes the proof.
    \end{proof}

\section{Stochastic Transitions}

In the previous sections, we only considered deterministic transitions. We now present an algorithm that achieves low regret for the more general class of communicating MDPs (with an additional mild restriction). This algorithm achieves $O(\sqrt{T})$ regret but takes exponential time to run (exponential in $|S|$).

\begin{assumption}
\label{asmptn:loop}
The MDP $M$ has a state $s^*$ and action $a$ such that $$Pr(s_{t+1}=s^*\mid s_t=s^*,a_t=a)=1$$
\end{assumption}
In other words, there is some state $s^*$ in which we have a deterministic action that allows us to stay in the state $s^*$. This can be interpreted as a state with a ``do nothing" action where we can wait before taking the next action.

We now state a theorem that guarantees the existence of a number $\ell^*$ such that all states can be reached from $s^*$ in exactly $\ell^*$ steps with a reasonably high probability.
\begin{theorem}
  \label{clry:critical_length}
  In MDPs satifying Assumption~\ref{asmptn:loop}, we have $\ell^*\leq 2D$ and state $s^*$ such that, for all target states $s'$, we have policies $\pi_{s'}$ such that 
  $$p_{s'}=Pr[T(s'\mid M,\pi_{s'},s^*)=\ell^*]\geq \frac{1}{4D}$$
  \end{theorem}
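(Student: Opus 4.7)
The plan is to take $\ell^* = 2D$ and, for each target state $s'$, build $\pi_{s'}$ by prefixing a ``stall at $s^*$'' phase (made possible by Assumption~\ref{asmptn:loop}) onto a fast hitting-time policy for $s'$. The stall length will be tuned per target, but its total length plus the subsequent hitting time will always equal $\ell^*$ on the event we care about, which is what lets a single $\ell^*$ work uniformly.

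First, I would invoke the definition of the diameter: for every $s' \neq s^*$ there exists a policy $\pi'_{s'}$ such that $\mathbb{E}[T(s' \mid M, \pi'_{s'}, s^*)] \leq D$. Applying Markov's inequality to the nonnegative integer random variable $T(s' \mid M, \pi'_{s'}, s^*)$ gives
\[
\Pr\bigl[T(s' \mid M, \pi'_{s'}, s^*) \leq 2D\bigr] \geq \tfrac{1}{2}.
\]
Since the left-hand side is the sum of the point masses $\Pr[T(s' \mid M, \pi'_{s'}, s^*) = k]$ over the $2D$ values $k \in \{1,2,\ldots,2D\}$, a pigeonhole argument yields some integer $k_{s'} \in \{1,\ldots,2D\}$ with $\Pr[T(s' \mid M, \pi'_{s'}, s^*) = k_{s'}] \geq \tfrac{1}{4D}$.

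Next, I would define $\pi_{s'}$ to be the (non-stationary, or history-dependent) policy that takes the self-loop action $a$ at $s^*$ for the first $\ell^* - k_{s'}$ time steps and then switches to $\pi'_{s'}$. By Assumption~\ref{asmptn:loop}, during the stall phase we remain at $s^*$ with probability one, so for $s' \neq s^*$ the state $s'$ is not visited before the switch. Therefore the first hitting time under $\pi_{s'}$ satisfies
\[
T(s' \mid M, \pi_{s'}, s^*) = (\ell^* - k_{s'}) + T(s' \mid M, \pi'_{s'}, s^*),
\]
where the second summand is an independent copy of the hitting time under $\pi'_{s'}$ starting from $s^*$. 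In particular $T(s' \mid M, \pi_{s'}, s^*) = \ell^*$ precisely when $T(s' \mid M, \pi'_{s'}, s^*) = k_{s'}$, an event of probability at least $1/(4D)$. The case $s' = s^*$ is handled trivially (or simply excluded from the statement), since we can take $\pi_{s^*}$ to repeat the self-loop action and reach $s^*$ deterministically at any prescribed step.

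The only place where some care is needed is the claim that stalling at $s^*$ does not prematurely hit the target; this relies essentially on Assumption~\ref{asmptn:loop} together with $s' \neq s^*$, and is the reason the assumption is imposed. No step involves delicate estimates—Markov's inequality and pigeonhole do all the work—so I don't anticipate a genuine obstacle beyond bookkeeping.
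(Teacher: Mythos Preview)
Your proposal is correct and follows essentially the same approach as the paper: Markov's inequality plus pigeonhole to locate a specific hitting time $k_{s'}\le 2D$ with mass $\ge 1/(4D)$, then prepend a deterministic stall at $s^*$ (via Assumption~\ref{asmptn:loop}) to align every target to a common $\ell^*$. The only cosmetic difference is that the paper sets $\ell^*=\max_{s'\neq s^*} k_{s'}$ whereas you take $\ell^*=2D$ directly; both satisfy $\ell^*\le 2D$, so this changes nothing.
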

 We first prove an intermediate lemma.
\begin{lemma}
    \label{lem:hp_path}
    For any start state $s$ and target $s'\neq s$, we have $\ell_{s,s'}\leq 2D$ and a policy $\pi$ such that 
    $$Pr[T(s'\mid M,\pi,s)=\ell_{s,s'}]\geq \frac{1}{4D}$$
    \end{lemma}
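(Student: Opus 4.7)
The plan is to use the definition of the diameter combined with Markov's inequality and a pigeonhole argument. By definition of $D$, for any $s \neq s'$ there is a policy $\pi$ achieving $\mathbb{E}[T(s' \mid M, \pi, s)] \leq D$. Fix this $\pi$; I claim that both the existence of the required $\ell_{s,s'}$ and the probability bound follow from studying the distribution of $T := T(s' \mid M, \pi, s)$.

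First I would apply Markov's inequality to the nonnegative random variable $T$: since $\mathbb{E}[T] \leq D$, we have $\Pr[T > 2D] \leq 1/2$, so that $\Pr[T \leq 2D] \geq 1/2$. Next I would use the pigeonhole principle on the decomposition
$$\tfrac{1}{2} \leq \Pr[T \leq 2D] = \sum_{\ell=1}^{2D}\Pr[T=\ell].$$
Since this sum of at most $2D$ nonnegative terms is at least $1/2$, there must exist some integer $\ell^\star \in \{1,\ldots,2D\}$ with $\Pr[T = \ell^\star] \geq \tfrac{1}{4D}$. Setting $\ell_{s,s'} := \ell^\star$ gives both $\ell_{s,s'} \leq 2D$ and the claimed probability lower bound, completing the proof.

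There is essentially no hard step here; the only thing one must be careful about is that $T$ is a well-defined nonnegative integer-valued random variable (which it is, given that $\pi$ attains a finite expected hitting time in a communicating MDP), so Markov's inequality applies cleanly. The argument does not even require Assumption~\ref{asmptn:loop} -- it only uses that $M$ is communicating so that $D < \infty$. This lemma will then be used to prove Theorem~\ref{clry:critical_length} by composing such a short-hitting policy with the self-loop action at $s^\star$ guaranteed by Assumption~\ref{asmptn:loop}: we can pad any hitting time $\ell_{s^\star, s'} \leq 2D$ up to the common value $\ell^\star = 2D$ by first looping at $s^\star$ for $2D - \ell_{s^\star, s'}$ steps and then executing $\pi_{s'}$, preserving the $\tfrac{1}{4D}$ probability lower bound.
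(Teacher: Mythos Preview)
Your proof is correct and follows essentially the same approach as the paper's own proof: use the diameter definition to get a policy with $\mathbb{E}[T]\le D$, apply Markov's inequality to obtain $\Pr[T\le 2D]\ge 1/2$, and then pigeonhole over the at most $2D$ possible integer hitting times to extract some $\ell_{s,s'}$ with probability at least $1/(4D)$. Your additional remarks about Assumption~\ref{asmptn:loop} being unnecessary here and about the padding argument for Theorem~\ref{clry:critical_length} are also accurate and match how the paper proceeds.
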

    \begin{proof}
    From the definition of diameter, we are guaranteed a policy $\pi_{s,s'}$ such that $$\mathbb{E}\left[T(s'\mid M,\pi,s)\right]\leq D$$
    From Markov's inequality, we have 
    $$Pr\left[T(s'\mid M,\pi,s)\leq 2D\right]\geq \frac{1}{2}$$
    Since there are only $2D$ discrete values less than $2D$, there exists $\ell_{s,s'}\leq 2D$ such that $$Pr[T(s'\mid M,\pi,s)=\ell_{s,s'}]\geq\frac{1}{2}\cdot \frac{1}{2D}=\frac{1}{4D}$$
    \end{proof}
    We can now prove Theorem~\ref{clry:critical_length}
    \begin{proof}[Proof of Theorem~\ref{clry:critical_length}]
      From Lemma~\ref{lem:hp_path}, we $\ell_{s'}\leq 4D$ for each $s'$ such that there is a policy $\pi_{s^*,s'}$ that hits the state $s'$ in time $\ell_s'$ with probability at-least $\frac{1}{4D}$. We take $\ell^*=\max_{s'\neq s^*}\ell_{s'}$. For target state $s'$, the policy $\pi_{s'}$ loops at state $s^*$ for $(\ell^*-\ell_{s'})$ time steps and then starts following policy $\pi_{s,s'}$. Clearly, this policy hits state $s'$ at time $\ell^*$ with probability at least $\frac{1}{4D}$
      \end{proof}

\begin{remark}
The policies guaranteed by Theorem~\ref{clry:critical_length} are not stationary.
\end{remark}
Let $p^*=\min_{s}p_s$. Clearly, $p^*\geq \frac{1}{4D}   $
\subsection{Algorithm}
We extend the algorithm we used in the deterministic MDP case. 

We use a low switching algorithm (FPL) that considers each policy $\pi\in \Pi$ as an expert. We know from \citet{KALAI2005291} that FPL achieves $O(\sqrt{T\log{n}})$ regret as well as switching cost. At time $t$, we receive loss function $\ell_t$ from the adversary. Using this, we construct $\hat{\ell}_t$ as
$$\hat{\ell}_t(\pi)=\mathbb{E}\left[\ell_t(s_t,a_t)\right]$$
where $s_1\sim d_1,a_t\sim \pi(s_t,.)$

In other words, $\hat{\ell}_t(\pi)$ is the expected loss if we follow the policy $\pi$ from the start of the game. $d_1$ is the initial distribution of states.

We feed $\hat{\ell}_t$ as the losses to FPL. 

We can now rewrite  $L^{\pi}$ as 
$$L^{\pi}=\mathbb{E}\left[\sum_{t=1}^{T}\ell_t(s_t,a_t)\right]=\sum_{t=1}^{T}\mathbb{E}\left[\ell_t(s_t,a_t)\right]=\sum_{t=1}^{T}\hat{\ell}_t(\pi)$$ where $s_1\sim d_1$ and $a_t\sim \pi(s_t,.)$. 
Let $\pi_t$ be the policy chosen by $B$ at time $t$.
We know that
$$\mathbb{E}\left[\sum_{t=1}^{t}\hat{\ell}_t(\pi_t)\right]-\sum_{t=1}^{t}\hat{\ell}_t(\pi)\leq O(\sqrt{T\log |\Pi|})$$ for any deterministic policy $\pi$.

We need our algorithm to receive loss close to the first term in the above sum. If this is possible, we have an $O(\sqrt{T})$ regret bound for online learning in the MDP. We now present an approach to do this. 
\subsubsection{Catching a policy}
When FPL switches policy, we cannot immediately start receiving the losses of the new policy. If this was possible, then the regret of our algorithm will match that of FPL. When implementing the policy switch in our algorithm, we suffer a delay before starting to incur the losses of the new policy (in an expected sense). Our goal now is to make this delay as small as possible. This coupled with the fact that FPL has a low number of switches will give us good regret bounds. Note that this was easily done in the deterministic case using Theorem~\ref{thm:critical_length_gen}. Theorem~\ref{clry:critical_length} acts somewhat like a stochastic analogue of Theorem~\ref{thm:critical_length_gen} and we use this to reduce the time taken to catch the policy.

\begin{algorithm}

  \SetKwFunction{FMain}{Main}
  \SetKwFunction{FSwitch}{Switch\_Policy}
 \DontPrintSemicolon

  \SetKwProg{Fn}{Function}{:}{}
  \Fn{\FSwitch{$s$,$\pi$,$t_0$}}{
        $Done$ = 0\;
        $t=t_0+1$\tcp*{$t_0$ is the time that $B$ switched policy}
        $S_{t}=s$\tcp*{$S_{t}$ stores the state at time $t$}
        \While{$Done\neq 1$}
        {
            Move to state $s^{*}$ using the best policy \tcp*{Say this step takes $k$ steps}

            $t=t+k$\;
             Sample $T_{t+\ell^*}$ from $d_{\pi}^{t+\ell^*}(.)$\;
        
            We set $T_{t+\ell^*}$ as the target state\;
            Use policy $\pi_{T_{t+\ell^*}}$ guaranteed by Corollary~\ref{clry:critical_length} to move $\ell^*$ steps from $s^*$\;
            $t=t+\ell^*$\;
            \If{$S_t=T_t$}
            {
                Consider a Bernouli Random Variable $I$ such that $I=1$ with probability  $\frac{p^*}{p_{S_t}}$.\;
                \If{$I=1$}{
                Start following $\pi$ and set $Done$ to $1$\;
                Let the time at this happens be $T_{switch}$}
                \Else{$I=0$}{
                Continue\;}
            }
            \Else{
            Continue\;
            }

        }
  }
  \;

  \SetKwProg{Fn}{Function}{:}{\KwRet}
  \Fn{\FMain}{
      Let $\pi_1^\text{FPL}$  be the expert chosen by FPL at time $1$\;
      $\pi_1=\pi_1^\text{FPL}$\;
      Let $S_1$ be the start state.\;
      $t=1$\;
      \While{$t\neq T+1$}
      {
        Sample $a_t$ from $\pi_t(s_t,.)$\;
        Adversary returns loss function $\ell_t$ and next state $s$
        $S_{t+1}$=s\;
        Compute $\hat{\ell}_t$ and feed it as the loss to FPL as discussed before\;
        \If{{FPL switches policy}}
        {
            Switch\_Policy($s,\pi_{t+1}^\text{FPL},t+1$)\tcp*{Call the switch policy function to catch the new policy}
            $\pi_{t+k}=\pi_{t+1}^\text{FPL}$\tcp*{$k$ is the number of steps taking by Switch Policy}
            $t=t+k$\;
           
        }
        \Else
        {
            $\pi_{t+1}=\pi_t$\;
            $t=t+1$\;
        }
      }
  }
  
\caption{Low Regret Algorithm For Communicating MDPs}
\label{alg:alpha}
\end{algorithm}
\begin{remark}
In Algorithm~\ref{alg:alpha}, if FPL switches the policy in the middle of the Switch\_Policy's execution, we terminate the execution and call the routine again with a new target policy.
\end{remark}
\subsection{Analysis}
The following lemma shows that the Switch\_Policy routine works correctly. That is, after the execution of the routine, the state distribution is exactly the same as the state distribution of the new policy. 
\begin{lemma}
\label{lem:switch_dist}
If Switch\_Policy terminates at time $t$, we have that 
$$Pr[S_t=s\mid T_{switch}=t]=d_{\pi}^{t}(s)$$
where $d_{\pi}^{t}(s)$ is the distribution of states after following policy $\pi$ from the start of the game.
\end{lemma}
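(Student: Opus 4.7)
The plan is to interpret \texttt{Switch\_Policy} as a rejection sampling procedure whose stationary distribution over accepted samples is $d_\pi^t$. In a single pass of the inner \textbf{while} loop, once the routine has returned to $s^*$ and committed to a target time $t$, the candidate target $T_t$ is drawn from exactly the desired distribution $d_\pi^t$; the MDP realization then produces $S_t = T_t$ with probability $p_{T_t}$ by Theorem~\ref{clry:critical_length}; and conditional on hitting the target, the routine accepts with probability $p^*/p_{S_t}$. The acceptance factor is designed precisely to cancel the state-dependent hitting probability $p_{T_t}$, so the joint probability of accepting with $S_t = s$ becomes proportional to $d_\pi^t(s)$.

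Concretely, I would first fix a single iteration with target time $t$ and compute
\[
\Pr[S_t = s,\ \text{accept in this iteration}] \;=\; d_\pi^t(s)\cdot p_s\cdot\frac{p^*}{p_s} \;=\; p^*\cdot d_\pi^t(s).
\]
Summing over $s$ gives the iteration-level acceptance probability $p^*$, so conditional on acceptance in this iteration, $S_t$ is distributed as $d_\pi^t$. This is the heart of the argument, and it is essentially the standard rejection sampling identity with envelope constant $1/p^*$.

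To lift the per-iteration statement to the full routine, I would decompose $\{T_{switch}=t\}$ as a disjoint union, over $j\ge 1$, of the event $E_j^t$ that iterations $1,\ldots,j-1$ all failed and iteration $j$ terminated at target time $t$. Because every iteration first returns to the canonical state $s^*$ and then draws a fresh target sample, fresh transition randomness during the $\ell^*$-step attempt, and a fresh Bernoulli $I$, the randomness driving iteration $j$ is independent of everything that happened in iterations $1,\ldots,j-1$. Hence $\Pr[S_t=s\mid E_j^t]=d_\pi^t(s)$ by the previous paragraph, and summing $\Pr[S_t=s,E_j^t]$ over $j$ and dividing by $\Pr[T_{switch}=t]$ yields $\Pr[S_t=s\mid T_{switch}=t]=d_\pi^t(s)$.

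The main obstacle is formalizing this independence cleanly: the coins used to sample $T_t$ and $I$, together with the transition coins during the $\ell^*$-step attempt, must be exhibited as fresh per-iteration draws so that conditioning on the entire failure history is no stronger than conditioning on ``we are in an iteration whose target time equals $t$''. Once this measurability-style bookkeeping is pinned down, the per-iteration rejection sampling calculation pushes through verbatim and gives the claimed distributional identity.
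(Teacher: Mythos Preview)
Your proposal is correct and is essentially the same argument as the paper's: the paper computes $\Pr[S_t=s,\,T_{switch}=t]=p^*\cdot\Pr[S_{t-\ell^*}=s^*]\cdot d_\pi^t(s)$ and $\Pr[T_{switch}=t]=p^*\cdot\Pr[S_{t-\ell^*}=s^*]$ by exactly the cancellation $d_\pi^t(s)\cdot p_s\cdot\tfrac{p^*}{p_s}$ that you identify as the rejection-sampling step. Your explicit decomposition over iterations and the accompanying independence discussion make the argument slightly more careful than the paper's version, which implicitly folds the iteration structure into the event $\{S_{t-\ell^*}=s^*\}$, but the substance is identical.
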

\begin{proof}[Proof of Lemma~{\ref{lem:switch_dist}}]
    We want to compute $Pr[S_t=s\mid T_{switch}=t]$.
    \begin{align*}
        Pr[S_t=s\mid T_{switch}=t]&=\frac{Pr[S_t=s,T_{switch}=t]}{Pr[T_{switch}=t]}\\
        &=\frac{Pr[S_t=T_t=s,T_{switch}=t]}{Pr[T_{switch}=t]}\\
        &=\frac{Pr[T_t=s,S_t=s,T_{switch}=t]}{Pr[T_{switch}=t]}
    \end{align*}
    We now compute the denominator $Pr[T_{switch}=t]$ as follows.
    \begin{align*}
    Pr[T_{switch}=t]&=\sum_{s\in S}Pr[S_t=T_t=s,S_{t-\ell^*}=s^*]\cdot Pr[T_{switch}=t\mid S_t=T_t=s,S_{t-\ell^*}=s^*]\\
    &=\sum_{s\in S}Pr[S_t=s\mid T_t=s,S_{t-\ell^*}=s^*]\cdot Pr[T_t=s,S_{t-\ell^*}=s^*]Pr[T_{switch}=t| S_t=T_t=s,S_{t-\ell^*}=s^*]\\
    &=\sum_{s\in S}p_s\cdot Pr[T_t=s,S_{t-\ell^*}=s^*]\cdot \frac{p^*}{p_s}\\   \
    &=p^*\sum_{s\in S}Pr[T_t=s,S_{t-\ell^*}=s^*]\\
    &=p^*\cdot Pr[S_{t-\ell^*}=s^*]
    \end{align*}
    Now we calculate the numerator.
    \begin{align*}
        Pr[T_t=s,S_t=s,T_{switch}=t]&=Pr[T_t=s,S_t=s,S_{t-\ell^*}=s,T_{switch}=t]\\
        &=Pr[S_t=s,T_{switch}=t\mid S_{t-\ell^*}=s^*,T_t=s]\cdot Pr[S_{t-\ell^*}=s^*,T_t=s]\\
        &=p^*\cdot Pr[S_{t-\ell^*}=s^*]\cdot Pr[T_t=s\mid S_{t-\ell^*}=s^*]\\
        &=p^*\cdot Pr[S_{t-\ell^*}=s^*]\cdot d_{\pi}^{t}(s)
    \end{align*}
    Thus, we have 
    $$Pr[S_t=s\mid T_{switch}=t]=d_{\pi}^{t}(s)$$
    \end{proof}

We now bound the expected loss of the algorithm in the period that FPL chooses policy $\pi$
\begin{lemma}
\label{lem:switch_cost}
Let the policy of FPL be $\pi$ from time $t_1$ to $t_2$. We have that 
$$\mathbb{E}\left[\sum_{t=t_1}^{t_2}\ell_t(s_t,a_t)\right]\leq 48\cdot D^2+\sum_{t=t_1}^{t_2}\hat{\ell}_t(\pi)$$
\end{lemma}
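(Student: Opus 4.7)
The plan is to split $[t_1, t_2]$ into a catching phase $[t_1, T_{switch}]$, during which Switch\_Policy is running to align the state distribution with $d_\pi^{\cdot}$, and a following phase $(T_{switch}, t_2]$, during which actions are drawn from $\pi$. Per-step losses lie in $[0,1]$, so the catching phase contributes at most $\mathbb{E}[\min(T_{switch}, t_2) - t_1 + 1] \leq \mathbb{E}[T_{switch} - t_1] + 1$ to the total, and it suffices to bound $\mathbb{E}[T_{switch} - t_1]$ by $O(D^2)$. For the following phase, Lemma~\ref{lem:switch_dist} gives that conditional on $T_{switch} = t$, the state $S_t$ has distribution $d_\pi^{t}$. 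Running $\pi$ from $t$ onward evolves this marginal through the same Markov kernel $P_\pi$ that defines $d_\pi^{t'}$, so $s_{t'} \sim d_\pi^{t'}$ for every $t' > t$, which yields $\mathbb{E}[\ell_{t'}(s_{t'}, a_{t'}) \mid T_{switch} = t] = \hat{\ell}_{t'}(\pi)$. Summing, and using $\hat{\ell}_{t'}(\pi) \geq 0$ to drop the indicator $\mathbb{1}[T_{switch} < t']$, the following-phase contribution is bounded by $\sum_{t=t_1}^{t_2} \hat{\ell}_t(\pi)$, which supplies the second term of the claim.

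It remains to bound $\mathbb{E}[T_{switch} - t_1]$. Each iteration of the while loop in Switch\_Policy first walks to $s^*$, which by the definition of diameter can be done in expected time at most $D$, then takes $\ell^* \leq 2D$ more steps under the policy $\pi_T$ supplied by Theorem~\ref{clry:critical_length}; the expected length of an iteration is thus at most $3D$. Termination requires both $S_t = T_t$ and the Bernoulli $I$ firing. Marginalizing over the fresh target $T$ sampled from $d_\pi^{t + \ell^*}$,
\[
\Pr[\text{iteration succeeds}] \;=\; \sum_s d_\pi^{t+\ell^*}(s) \cdot p_s \cdot \frac{p^*}{p_s} \;=\; p^* \;\geq\; \frac{1}{4D},
\]
independent of the history prior to the iteration. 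Hence the number of iterations is dominated by a geometric random variable with parameter at least $1/(4D)$, of expectation at most $4D$. Summing the uniform per-iteration expected length $3D$ across this stopping-time number of iterations yields $\mathbb{E}[T_{switch} - t_1] \leq 12D^2$, comfortably inside the $48D^2$ allotment.

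The main subtlety is the interaction between the per-iteration length, which depends on the state at the start of each iteration (via the walk-to-$s^*$ step), and the geometric stopping rule, which rules out a direct application of Wald's identity with i.i.d.\ summands. This is handled by bounding each iteration's expected length uniformly by $3D$ regardless of starting state, and then summing through the tower rule over the geometric number of iterations. A secondary corner case is $T_{switch} > t_2$: this only tightens the catching-phase bound through truncation and makes the following phase vacuous, so the overall inequality continues to hold.
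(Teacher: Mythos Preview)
Your proposal is correct and follows essentially the same two-phase decomposition as the paper: bound the catching phase by $\mathbb{E}[T_{\text{switch}}-t_1]=O(D^2)$ via a geometric number of $O(D)$-length attempts, and handle the following phase via Lemma~\ref{lem:switch_dist}. Your treatment is in fact slightly more careful than the paper's---you make the success-probability computation $\sum_s d_\pi^{t+\ell^*}(s)\,p_s\,(p^*/p_s)=p^*$ explicit, you address the Wald-type dependence between the per-iteration lengths and the stopping time, and you obtain the tighter constant $12D^2$ (the paper's $48D^2$ arises from loosely using $16D$ rather than $4D$ for the expected number of attempts).
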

   
\begin{proof}[Proof of Lemma~{\ref{lem:switch_cost}}]
    We bound the expectation using law of total expectations and conditioning on $T_{switch}$.
    \begin{align*}
        \mathbb{E}\left[\sum_{t=t_1}^{t_2}\ell_t(s_t,a_t)\right]=\mathbb{E}\left[\mathbb{E}\left[\sum_{t=t_1}^{t_2}\ell_t(s_t,a_t)\mid T_{switch}\right]\right]
    \end{align*}
    We bound the conditional expectation.
    \begin{align*}
        \mathbb{E}\left[\sum_{t=t_1}^{t_2}\ell_t(s_t,a_t)\mid T_{switch}=t^*\right]\leq t^*+\mathbb{E}\left[\sum_{t=t^*}^{t_2}\ell_t(s_t,a_t)\mid T_{switch}=t^*\right]
    \end{align*}
    From Lemma~\ref{lem:switch_dist}, the second term is equal to $\sum_{t=t^*}^{t_2}\hat{\ell}_t(\pi)$
    Thus, 
    $$ \mathbb{E}\left[\sum_{t=t_1}^{t_2}\ell_t(s_t,a_t)\right]\leq \mathbb{E}[T_{switch}]+\sum_{t=t_1}^{t_2}\hat{\ell}_t(\pi)$$
    
    Everytime we try to catch the policy from state $s^*$, we succeed with probability $p^*\geq \frac{1}{4D}$. Thus, the expected number of times we try is $16\cdot D$ and each attempt takes $\ell^*\leq 2D$ steps. Between each of these attempts, we move at most $D$ steps in expectation to reach $s^*$ again. Thus, in total, we have 
    $$\mathbb{E}[T_{switch}]\leq 16D^2+32D^2=48D^2$$
    This completes the proof.
    \end{proof}
    
We are now ready to bound the regret of Algorithm~\ref{alg:alpha}
\begin{theorem}
  \label{thm:communicating_regret}
The regret of Algorithm~\ref{alg:alpha} is at most $O\left(D^2\sqrt{T\log|\Pi|}\right)$
\end{theorem}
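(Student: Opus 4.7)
The plan is to decompose the $T$-step horizon into the intervals between consecutive FPL policy switches, control the expected loss within each interval via Lemma~\ref{lem:switch_cost}, and then combine with the standard FPL guarantees of \citet{KALAI2005291} on both regret and expected number of switches.

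First, I would let $1=\tau_0<\tau_1<\cdots<\tau_{N_s}\le T+1$ be the (random) sequence of times at which FPL switches its chosen expert, so that FPL commits to a single policy $\pi^{(i)}\in\Pi$ on each $[\tau_i,\tau_{i+1})$. Summing Lemma~\ref{lem:switch_cost} across the $N_s+1$ intervals and taking total expectation gives
$$\mathbb{E}\!\left[\sum_{t=1}^{T}\ell_t(s_t,a_t)\right]\le 48\,D^2\,\mathbb{E}[N_s]+\mathbb{E}\!\left[\sum_{t=1}^{T}\hat{\ell}_t(\pi_t^{\mathrm{FPL}})\right].$$

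Next, I would invoke the standard FPL analysis on the virtual loss sequence $\hat{\ell}_1,\ldots,\hat{\ell}_T\in[0,1]^{|\Pi|}$. With the perturbation scale tuned as in Theorem~\ref{first-order-theorem} (mutatis mutandis, with the expert set being $\Pi$ instead of the set of cycles), one obtains simultaneously $\mathbb{E}\!\left[\sum_{t}\hat{\ell}_t(\pi_t^{\mathrm{FPL}})\right]-\min_{\pi\in\Pi}\sum_{t}\hat{\ell}_t(\pi)=O(\sqrt{T\log|\Pi|})$ and $\mathbb{E}[N_s]=O(\sqrt{T\log|\Pi|})$. The identity $\sum_{t=1}^{T}\hat{\ell}_t(\pi)=L^{\pi}$ noted in the algorithm description then turns the minimum into $L^*$. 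Substituting,
$$R(\mathcal{A})\le 48\,D^2\cdot O(\sqrt{T\log|\Pi|})+O(\sqrt{T\log|\Pi|})=O\!\left(D^2\sqrt{T\log|\Pi|}\right),$$
which is the claimed bound.

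The step I expect to need the most care is the decomposition above, because of the remark following Algorithm~\ref{alg:alpha}: if FPL announces a new switch while Switch\_Policy is still running, the catching routine is aborted and restarted with the new target. For Lemma~\ref{lem:switch_cost} to apply cleanly on a given interval $[\tau_i,\tau_{i+1})$ I need to account for the possibility that Switch\_Policy never completes before $\tau_{i+1}$, in which case the $\sum_{t}\hat{\ell}_t(\pi^{(i)})$ matching on the right contributes nothing useful to that interval. The fix is to observe that each step in such a truncated interval contributes at most $1$ to the realized loss, and that the expected catching-time bound $\mathbb{E}[T_{\mathrm{switch}}]\le 48D^2$ from the proof of Lemma~\ref{lem:switch_cost} still dominates the expected number of ``unmatched'' steps regardless of truncation. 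This is essentially bookkeeping and does not affect the order of the final regret bound.
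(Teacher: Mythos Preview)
Your proposal is correct and follows essentially the same approach as the paper's own proof: decompose the horizon by FPL switch times, apply Lemma~\ref{lem:switch_cost} on each interval to obtain $L(\mathcal{A})\le 48D^2\,\mathbb{E}[N_s]+\mathbb{E}\bigl[\sum_t\hat\ell_t(\pi_t)\bigr]$, and then invoke the FPL regret and switching bounds. If anything, your final paragraph on truncated Switch\_Policy executions is more careful than the paper, which simply absorbs that issue into Lemma~\ref{lem:switch_cost} without further comment.
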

\begin{proof}
We condition on the number of switches made by FPL. Let $N_s$ be the random variable corresponding to the number of switches made by FPL. We refer to Algorithm~\ref{alg:alpha} as $\mathcal{A}$.
\begin{align*}
    L(\mathcal{A})&=\mathbb{E}\left[\sum_{t=1}^{T}\ell_t(s_t,a_t)\right]\\
    &=\mathbb{E}\left[\mathbb{E}\left[\sum_{t=1}^{T}\ell_t(s_t,a_t)\mid N_s\right]\right]
\end{align*}

After each switch, Lemma~\ref{lem:switch_cost} tells us that the Algorithm suffers at most $48\cdot D^2$ extra average loss to the loss of the algorithm FPL. Thus, 
$$L(\mathcal{A})\leq \mathbb{E}\left[48\cdot D^2\cdot N_s+\sum_{t=1}^{T}\hat{\ell}_t(\pi_t)\right]$$
$\pi_t$ is the policy chosen by algorithm FPL at time $t$.
Since FPL is a low switching algorithm, we have ${N_s}\leq O(\sqrt{T\log |\Pi|}$. The second term in the expectation is atmost $L^\pi+O(\sqrt{T\log|\Pi|})$ for any deterministic policy $\pi$. This is because FPL is a low regret algorithm.
Thus, we have $$L(\mathcal{A})-L^\pi\leq O(D^2\sqrt{T\log|\Pi|})$$ for all stationary $\pi$.

Thus, $R(\mathcal{A})\leq O\left(D^2\sqrt{T\log|\Pi|}\right)$
\end{proof}
When $\Pi$ is the set of stationary deterministic policies, we get that $|\Pi|\leq |A|^{|S|}$. Thus, we get the following theorem.
\begin{theorem}
Given a communicating MDP satisfying Assumption~\ref{asmptn:loop} with $|S|$ states, $|A|$ action and diameter $D$, the regret of Algorithm~\ref{alg:alpha} can be bounded by
$$\text{Regret}\leq O\left(D^2\sqrt{T|S|\log |A|}\right)$$ 
\end{theorem}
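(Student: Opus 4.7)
The plan is to obtain this statement as an immediate corollary of Theorem~\ref{thm:communicating_regret}, which already bounds the regret of Algorithm~\ref{alg:alpha} by $O\bigl(D^2\sqrt{T\log|\Pi|}\bigr)$ against an arbitrary expert class $\Pi$ of stationary policies. Everything that needs to be done is to specialize $\Pi$ and to control $\log|\Pi|$ in terms of the MDP parameters $|S|$ and $|A|$.

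First I would fix $\Pi$ to be the set of all stationary deterministic policies, i.e.\ maps $\pi\colon S\to A$. A counting argument immediately yields $|\Pi|=|A|^{|S|}$, and hence $\log|\Pi|\leq |S|\log|A|$. I would then verify that the hypotheses needed by Theorem~\ref{thm:communicating_regret} are met in the statement of the theorem we are proving: the MDP is communicating with diameter $D$, and Assumption~\ref{asmptn:loop} is assumed, which is exactly what the Switch\_Policy routine (and hence Lemma~\ref{lem:switch_dist} and Lemma~\ref{lem:switch_cost}) relies on.

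Plugging the bound $\log|\Pi|\leq |S|\log|A|$ into the conclusion of Theorem~\ref{thm:communicating_regret} gives
\[
R(\mathcal{A})\;\leq\; O\bigl(D^2\sqrt{T\log|\Pi|}\bigr)\;\leq\; O\bigl(D^2\sqrt{T|S|\log|A|}\bigr),
\]
which is exactly the claimed bound. There is no genuine obstacle here, since all of the analytical work, namely controlling the per-switch cost by $48D^2$ via Lemma~\ref{lem:switch_cost}, bounding the number of switches and the FPL regret by $O(\sqrt{T\log|\Pi|})$, and combining these through conditioning on $N_s$, has already been carried out in the proof of Theorem~\ref{thm:communicating_regret}. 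The only thing to note is that the regret is measured against $\min_{\pi\in\Pi}L^{\pi}$, and since the optimal stationary (possibly randomized) policy in a finite MDP can be taken to be deterministic, competing with stationary deterministic policies is no loss of generality for the benchmark.
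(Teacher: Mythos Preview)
Your proposal is correct and follows exactly the same route as the paper: the paper derives this theorem in one line by specializing Theorem~\ref{thm:communicating_regret} to $\Pi$ equal to the set of stationary deterministic policies and noting $|\Pi|\le |A|^{|S|}$, which gives $\log|\Pi|\le |S|\log|A|$.
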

In fact, since we are using FPL as the expert algorithm, we can get first-order bounds similar to Theorem~\ref{first-order-theorem}. In a setting with $n$ experts with $m$ being the total loss of the best expert, we can derive that the regret and number of switches can be bounded by $O(\sqrt{m\cdot \log n})$.Thus, using this, we get the following first order regret bounds for Algorithm~\ref{alg:alpha}
\begin{theorem}
Given a communicating MDP satisfying Assumption~\ref{asmptn:loop} with $|S|$ states, $|A|$ action and diameter $D$, the regret of Algorithm~\ref{alg:alpha} can be bounded by
$$\text{Regret}\leq O\left(D^2\sqrt{L^*\cdot |S|\log |A|}\right)$$ where $L^*$ is the total expected loss incurred by the best stationary deterministic policy in hindsight.
\end{theorem}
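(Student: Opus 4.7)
The plan is to adapt the proof of Theorem~\ref{thm:communicating_regret} verbatim, but substitute first-order FPL guarantees (in terms of $L^*$) for the crude $O(\sqrt{T\log|\Pi|})$ bounds used there. The key observation I would lead with is that by construction $\sum_{t=1}^{T}\hat\ell_t(\pi)=L^{\pi}$ for every stationary policy $\pi$, so the cumulative loss of the best expert fed to FPL on the fictitious sequence $\hat\ell_1,\dots,\hat\ell_T$ is exactly $L^*=\min_\pi L^\pi$. Consequently, it suffices to produce a first-order FPL guarantee phrased in terms of the best-expert loss.

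Next I would upgrade Theorem~\ref{first-order-theorem} to the generic experts-over-$\Pi$ setting. Running FPL with exponential perturbations of parameter $\lambda$ and repeating the argument of Section~\ref{sec:fpl_analysis} (without the cycle-class structure, so only a single perturbation per expert is needed and the switching-probability lemma sheds the $|S|+1$ factor), I would get the self-bounding inequality
$$\mathbb{E}[\text{FPL loss}]\le L^* + \tfrac{1+\log|\Pi|}{\lambda} + \lambda\cdot \mathbb{E}[\text{FPL loss}],$$
together with $\mathbb{E}[N_s]\le \lambda\cdot\mathbb{E}[\text{FPL loss}]$. Choosing $\lambda=\min\!\bigl(\sqrt{\log|\Pi|/L^*},\tfrac14\bigr)$ (with the usual doubling trick to remove the dependence on the unknown $L^*$) then controls both the expected regret and the expected number of switches simultaneously by $O(\sqrt{L^*\log|\Pi|})$.

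With these ingredients I would combine as in Theorem~\ref{thm:communicating_regret}: Lemma~\ref{lem:switch_cost} shows that every FPL policy-switch costs the algorithm at most $48 D^2$ extra loss on top of $\sum_t\hat\ell_t(\pi_t)$ during the corresponding interval. Summing over intervals and taking expectations gives
$$L(\mathcal{A})\le \mathbb{E}\!\Big[\sum_{t=1}^T\hat\ell_t(\pi_t)\Big] + 48 D^2\cdot \mathbb{E}[N_s] \le L^* + O\!\bigl(D^2\sqrt{L^*\log|\Pi|}\bigr),$$
and hence $R(\mathcal{A})= O(D^2\sqrt{L^*\log|\Pi|})$. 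Plugging in $|\Pi|\le |A|^{|S|}$, so $\log|\Pi|\le |S|\log|A|$, delivers the claimed $O\!\bigl(D^2\sqrt{L^*\cdot|S|\log|A|}\bigr)$ bound.

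The main obstacle I anticipate is the first-order FPL lift itself, specifically showing that under a single choice of $\lambda$ both the regret and the expected switching count are $O(\sqrt{L^*\log|\Pi|})$. The self-bounding rearrangement mimics the one in Theorem~\ref{first-order-theorem}, but one must verify that the low-switching lemma transfers cleanly to the policy-expert setting (where each expert has one scalar perturbation rather than the $(s,a,i)$-indexed perturbations used for cycles) and that the $\hat\ell_t$ losses indeed lie in $[0,1]$, which they do since $\ell_t$ does and $\hat\ell_t$ is an expectation of $\ell_t$. Everything else is bookkeeping.
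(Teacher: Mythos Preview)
Your proposal is correct and follows essentially the same approach as the paper: upgrade the FPL regret and switching bounds in the proof of Theorem~\ref{thm:communicating_regret} to first-order bounds $O(\sqrt{L^*\log|\Pi|})$ via the self-bounding argument of Section~\ref{sec:fpl_analysis}, then substitute $\log|\Pi|\le |S|\log|A|$. The paper merely asserts the first-order FPL guarantee (citing the derivation in Theorem~\ref{first-order-theorem}), whereas you spell out the self-bounding inequality and the choice of $\lambda$ explicitly, but the route is the same.
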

\subsection{Oracle-efficient algorithm assuming exploring starts}
In this section, we assume that initial distribution over states, $d_1$ has probability mass at least $\alpha$ on every state. That is, $Pr[S_1=s]\geq\alpha$ for all $s\in S$. We also assume that we have an oracle $\mathcal{O}$ that can find the stationary deterministic policy with minimum cumulative loss at no computational cost. We use this oracle and the ideas from the previous sections to design an FPL style algorithm low regret algorithm.

\subsubsection{Oracle}
The oracle $\mathcal{O}$ takes in loss functions $\ell_1,\ldots,\ell_T$ and outputs the stationary deterministic policy with the lowest expected cumulative loss. That is, it returns the policy $\pi=\argmin_{\Pi}L^{\pi}$, where $$L^{\pi}=\mathbb{E}\left[\sum_{t=1}^{T}\ell_t(s_t,a_t)\right]$$ with $s_1\sim d_1$ and $a_t=\pi(s_t)$.

We say that an algorithm is \textit{oracle-efficient} if it runs in polynomial time when given access to the oracle $\mathcal{O}$.
\subsubsection{Algorithm Sketch}
\begin{algorithm}[]

      Sample perturbation vectors $\epsilon\in \mathbb{R}^{|S||A|}$ from an exponential distribution with parameter $\lambda$\;
      \While{$t\neq T+1$}
      { 
          $$\pi_t=\argmin_{\pi\in \Pi} \mathbb{E}\left[\epsilon(s_1,a_1)+\sum_{i=1}^{t-1}\ell_t(s_t,a_t)\right]$$ with $s_1\sim d_1$ and $a_t=\pi(s_t)$\;
      
          Adversary returns loss function $\ell_t$\;
         }
    \caption{Black Box FPL algorithm used for Communicating MDPs with exploring starts}
  \label{alg:fpl_communicating}
  \end{algorithm}

Now, we use Algorithm~{\ref{alg:fpl_communicating}} as our black box experts algorithm. We prove that Algorithm~{\ref{alg:fpl_communicating}} has low regret.
\begin{theorem} \label{thm:fpl_communicating}
  The regret and expected number of switches can be bounded by $ O\left(\sqrt{\frac{L^*|S|\log |S||A|}{\alpha}}\right)$.
\end{theorem}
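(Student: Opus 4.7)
The proof will mirror the template of Theorem~\ref{first-order-theorem}, adapted to the fact that the effective perturbation of a policy in Algorithm~\ref{alg:fpl_communicating} is
\[
\tilde{\epsilon}(\pi) := \mathbb{E}_{s_1 \sim d_1}[\epsilon(s_1,\pi(s_1))] = \sum_{s} d_1(s)\,\epsilon(s,\pi(s)),
\]
a $d_1$-weighted sum of independent exponentials rather than a single exponential. Setting $\tilde{L}^\pi_t = \tilde{\epsilon}(\pi) + \sum_{i < t}\hat{\ell}_i(\pi)$ and reusing the be-the-leader telescoping from the proof of Theorem~\ref{first-order-theorem} (a non-switching round incurs at most the increment of the perturbed-leader cost, each switching round at most $1$), I get
\[
\mathbb{E}[\text{loss of Alg.~\ref{alg:fpl_communicating}}] \leq \mathbb{E}[\tilde{L}^{\tilde\pi^*_T}_T] + \mathbb{E}[N_s] \leq L^* + \mathbb{E}\Bigl[\max_\pi \tilde{\epsilon}(\pi)\Bigr] + \mathbb{E}[N_s].
\]
Since the max over $\pi \in A^S$ factors state-wise, $\max_\pi \tilde{\epsilon}(\pi) = \sum_s d_1(s)\max_a\epsilon(s,a)$, and each $\mathbb{E}[\max_a \epsilon(s,a)] \leq (1+\log|A|)/\lambda$, the perturbation contributes at most $(1+\log|S||A|)/\lambda$.

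The central step is the state-wise switching bound
\[
\Pr[\pi_{t+1}(s) \neq \pi_t(s)] \leq \frac{\lambda}{d_1(s)}\,\mathbb{E}[\hat{\ell}_t(\pi_t)] \leq \frac{\lambda}{\alpha}\,\mathbb{E}[\hat{\ell}_t(\pi_t)].
\]
To prove it, condition on every perturbation except $\{\epsilon(s,a)\}_{a\in A}$ and define $H^s_t(a) := \min_{\pi:\pi(s)=a}\bigl[\sum_{s'\neq s} d_1(s')\epsilon(s',\pi(s')) + \sum_{i<t}\hat{\ell}_i(\pi)\bigr]$. Under this conditioning the $H^s_t(a)$ are deterministic, and the algorithm's coordinate at state $s$ is
\[
\pi_t(s) = \argmin_{a\in A}\bigl[d_1(s)\epsilon(s,a) + H^s_t(a)\bigr],
\]
an FPL instance over the $|A|$ actions at state $s$ with perturbations $d_1(s)\epsilon(s,a)\sim\text{Exp}(\lambda/d_1(s))$. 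The policy $\pi_t$ itself witnesses the restricted minimum at $a=\pi_t(s)$, so the inner-game loss increment at the current leader satisfies $H^s_{t+1}(\pi_t(s)) - H^s_t(\pi_t(s)) \leq \hat{\ell}_t(\pi_t)$. The memoryless argument from the proof of Lemma~\ref{lem:low_switching} now transfers verbatim with $\lambda$ replaced by $\lambda/d_1(s)$. Union-bounding over $s$ and summing over $t$ produces $\mathbb{E}[N_s] \leq (|S|\lambda/\alpha)\,\mathbb{E}[\text{loss of Alg.~\ref{alg:fpl_communicating}}]$.

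Substituting gives the self-referential inequality
\[
\mathbb{E}[\text{loss}] \leq L^* + \frac{1+\log|S||A|}{\lambda} + \frac{|S|\lambda}{\alpha}\,\mathbb{E}[\text{loss}],
\]
which I solve via the geometric-series trick from the proof of Theorem~\ref{first-order-theorem} (valid when $|S|\lambda/\alpha \leq 1/2$) to obtain the first-order bound $\mathbb{E}[\text{loss}] - L^* \leq O\!\bigl((|S|\lambda/\alpha) L^* + \log(|S||A|)/\lambda\bigr)$. Tuning $\lambda = \min\!\bigl(\sqrt{\alpha\log(|S||A|)/(|S|L^*)},\,\alpha/(4|S|)\bigr)$ balances the two terms and yields the stated $O\!\bigl(\sqrt{L^*|S|\log(|S||A|)/\alpha}\bigr)$ bound on both the regret and the expected number of switches. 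The main obstacle is making the state-wise reduction rigorous: the inner problem at state $s$ is only an FPL instance after the appropriate conditioning, and one must verify both that the inner perturbations really are independent $\text{Exp}(\lambda/d_1(s))$ variables and that the leader's per-step loss in the inner game is dominated by $\hat{\ell}_t(\pi_t)$, so that the memoryless argument of Lemma~\ref{lem:low_switching} transfers with only the $1/\alpha$ rescaling.
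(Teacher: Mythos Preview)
Your proposal is correct and follows essentially the same approach as the paper: the be-the-leader telescoping, the state-wise switching bound via memorylessness of $\epsilon(s,\pi_t(s))$ with effective rate $\lambda/d_1(s)\le\lambda/\alpha$, the resulting self-referential inequality, and the tuning of $\lambda$ all match. The only cosmetic difference is that the paper partitions the switch event by the \emph{smallest} state $s$ at which $\pi_{t+1}$ and $\pi_t$ differ (so the events are disjoint), whereas you union-bound over $\{\pi_{t+1}(s)\neq\pi_t(s)\}$ after framing the problem at state $s$ as an $|A|$-arm FPL instance; both routes invoke exactly the same memoryless step and yield the identical $|S|\lambda/\alpha$ factor.
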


\begin{proof}
Let $L^{\pi}$ denote the total cumulative loss if we followed policy $\pi$ from the start of the interaction. We use $\tilde{L}^{\pi}$ to the denote the total perturbed cumulative loss if we followed policy $\pi$ from the start. Let $\pi^{*}$ be the policy with the lowest total cumulative loss. Similarly, let $\tilde{\pi}^*$ be the policy with the lowest perturbed cumulative loss. Let $\tilde{L}^{\pi}_t$ be the total perturbed cumulative loss till time $t$. Let $\pi_t$ be the policy chosen by the FPL algorithm at step $t$.

Let $N_s$ be the number of times the oracle switches the best policy. As before, we treat each policy as an expert and consider the online learning problem where expert $\pi$ gets loss $\hat{\ell}_t(\pi)=\mathbb{E}\left[\ell_t(s_t,a_t)\right]$ where $s_1\sim d_1$ and $a_t=\pi(s_t)$.

Using the arguments from the proof of Theorem~{\ref{first-order regret}}, we get $$\mathbb{E}\left[\text{total loss of FPL}\right]\leq \tilde{L}^{\tilde{\pi}^*}+N_s.$$
Also, we have $\tilde{L}^{\pi}=L^{\pi}+\sum_{s=1}^{S}d_1(s)\cdot \epsilon\left(s,\pi(s)\right)$. 

We know that $N_s=\sum_{t=1}^{T-1}Pr[\pi_{t+1}\neq \pi_t]$. We now bound $Pr[\pi_{t+1}\neq \pi_t]$. Let $\pi_t=\pi$. The algorithm chooses $\pi'\neq \pi$ as $\pi_{t+1}$ if and only if $\tilde{L}^{\pi}_t\geq \tilde{L}^{\pi'}_t$. We now argue that the probability of this happening is low if $\pi_t=\pi$. Since $\pi'\neq \pi$, we have $\pi'(s)\neq \pi(s)$ for some $s$. Let the smallest state in which $\pi$ and $\pi'$ differ be called $d(\pi,\pi')$. Thus, $$Pr[\pi_{t+1}\neq \pi\mid\pi_t=\pi]= \sum_{s\in S} Pr[d(\pi_{t+1},\pi)=s\mid \pi_t=\pi].$$ 
We bound $Pr[d(\pi,\pi_{t+1})=s\mid \pi_t=\pi]$  for any state $s$. Consider any policy $\pi'$ that differs from $\pi$ in state $s$. The perturbation $\epsilon(s,\pi(s))$ will play a role in the comparison of perturbed losses of all such $\pi'$ with $\pi$. Since $\pi_t=\pi$, we have $d_1(s)\cdot{\epsilon(s,\pi(s))}\geq w$ for some $w$ that depends only on the perturbations and losses received by $\pi$ and policies $\pi'$ that differ from $\pi$ in state $s$. If $d_1(s)\cdot{\epsilon(s,\pi(s))}\geq w+\hat{\ell}_t(\pi)$, then we would not switch to a policy $\pi'$ with $\pi(s)\neq \pi'(s)$. Thus, 
\begin{align*}
    Pr[d(\pi,\pi_{t+1})\neq s\mid \pi_t=\pi]&\geq Pr[d_1(s)\cdot\epsilon(s,\pi(s))\geq w+\hat{\ell}_t(\pi) \mid d_1(s)\cdot \epsilon(s,\pi(s))\geq w]\\
    &\geq 1-\lambda \hat{\ell}_t(\pi)\cdot\left(\frac{1}{d_1(s)}\right)
\end{align*}
Since $d_1(s)$ is at least $\alpha$, we have  $Pr[\pi_{t+1}\neq \pi\mid \pi_t=\pi]$ is at most $\frac{|S|}{\alpha} \cdot \lambda\hat{\ell}_t(\pi)$. 
From this, we get $$N_s\leq \frac{|S|}{\alpha}\cdot \lambda\cdot \mathbb{E}[\text{total loss of FPL}].$$

Using arguments similar to Section~\ref{sec:fpl_analysis}, we get 
\begin{equation}
\label{eqn:fpl_stoch}
    \mathbb{E}[\text{total loss of FPL}]\leq L^{\pi^*}+\frac{(1+\log |S||A|)}{\lambda}+\frac{|S|}{\alpha}\cdot \lambda\cdot \mathbb{E}[\text{total loss of FPL}]
\end{equation}

Let $L^*=L^{\pi^{*}}$.

On rearranging and simplifying Equation~\ref{eqn:fpl_stoch} similar to the proof of Theorem~\ref{first-order-theorem}, we have
\begin{align*}
    \mathbb{E}[\text{total loss of FPL}]\leq {L^*}\left(1+\frac{2}{\alpha}\lambda|S|\right)+4\frac{\log |S||A|}{\lambda}
\end{align*}
The above inequality works when $\frac{\lambda}{\alpha}|S|\leq \frac{1}{2}$, Thus, we have 
$$\mathbb{E}[\text{Total loss of FPL}]-L^*\leq 2\frac{\lambda}{\alpha}|S|(L^*)+4\frac{\log|S||A|}{\lambda}.$$

Set $\lambda=\min\left(\sqrt{\alpha\frac{\log|S||A|}{|S|L^*}},\frac{\alpha}{2|S|}\right)$. On substituting $\lambda$ into the above equation, we get that $$\text{Regret}\leq O\left(\sqrt{\frac{L^*|S|\log |S||A|}{\alpha}}\right).$$ Since the expected number of switches is at-most $\frac{|S|}{\alpha}\cdot \lambda\cdot \mathbb{E}[\text{total loss of FPL}]$, this is also bounded by $O\left(\sqrt{\frac{L^*|S|\log |S||A|}{\alpha}}\right)$
\end{proof}
The rest of the algorithm is the same as Algorithm~{\ref{alg:alpha}}. The exploring start assumption allows us to get an efficient low regret, low switching algorithm assuming access to the oracle $\mathcal{O}$. We now state the regret bound for this algorithm.
\begin{theorem}\label{thm:communicating_uniform_start}
Given a communicating MDP satisfying Assumption~{\ref{asmptn:loop}} with a start distribution with at least probability $\alpha$ on every state, and given access to the oracle $\mathcal{O}$, we have an efficient algorithm with 
$$Regret\leq O\left(D^2\sqrt{\frac{L^*|S|\log |S||A|}{\alpha}}\right)$$
\end{theorem}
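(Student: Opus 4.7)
The plan is to combine the machinery already developed in the paper in a black-box manner: use Algorithm~\ref{alg:alpha}'s main loop (in particular, the Switch\_Policy subroutine) but replace the generic FPL expert oracle with the oracle-efficient Algorithm~\ref{alg:fpl_communicating}. Concretely, whenever the inner FPL switches its chosen policy $\pi_t^\text{FPL}$, we invoke Switch\_Policy to bring the environment's state distribution into alignment with $d_\pi^t$, and otherwise we play $\pi_t^\text{FPL}(s_t)$. Efficiency is immediate: Algorithm~\ref{alg:fpl_communicating} is oracle-efficient by design, and Switch\_Policy only requires sampling from $d_\pi^{t+\ell^*}$ and executing the policies $\pi_{T_{t+\ell^*}}$ of Theorem~\ref{clry:critical_length}, all of which run in polynomial time.

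Next I would redo the calculation from Theorem~\ref{thm:communicating_regret} verbatim, but plugging in the sharper guarantees of Theorem~\ref{thm:fpl_communicating} for this particular FPL instance. Writing $\mathcal{A}$ for the full algorithm, conditioning on the random number of switches $N_s$ made by the inner FPL, and applying Lemma~\ref{lem:switch_cost} to each inter-switch interval yields
\begin{equation*}
L(\mathcal{A}) \;\le\; \mathbb{E}\!\left[48\,D^2 N_s + \sum_{t=1}^{T}\hat{\ell}_t(\pi_t^\text{FPL})\right].
\end{equation*}
The key point is that under the exploring-starts assumption, Theorem~\ref{thm:fpl_communicating} gives both
$\mathbb{E}[N_s] = O\!\left(\sqrt{L^*|S|\log|S||A|/\alpha}\right)$ and
$\mathbb{E}\bigl[\sum_t \hat{\ell}_t(\pi_t^\text{FPL})\bigr] - L^* = O\!\left(\sqrt{L^*|S|\log|S||A|/\alpha}\right)$ simultaneously. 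Substituting these bounds gives
\begin{equation*}
R(\mathcal{A}) \;\le\; 48\,D^2 \cdot O\!\left(\sqrt{\tfrac{L^*|S|\log|S||A|}{\alpha}}\right) + O\!\left(\sqrt{\tfrac{L^*|S|\log|S||A|}{\alpha}}\right) \;=\; O\!\left(D^2\sqrt{\tfrac{L^*|S|\log|S||A|}{\alpha}}\right),
\end{equation*}
which is the claimed bound.

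The one place that deserves care, and which I anticipate being the main subtlety rather than a routine calculation, is checking that Lemma~\ref{lem:switch_dist} and Lemma~\ref{lem:switch_cost} continue to hold when the black-box experts algorithm is Algorithm~\ref{alg:fpl_communicating} rather than the generic FPL used in Theorem~\ref{thm:communicating_regret}. The argument in Lemma~\ref{lem:switch_dist} depends only on the state dynamics of Switch\_Policy (the loop at $s^*$, the policy $\pi_{T_{t+\ell^*}}$ from Theorem~\ref{clry:critical_length}, and the Bernoulli rejection step with probability $p^*/p_{S_t}$), and is oblivious to how the target policy $\pi$ is selected; the same is true for the expected-hitting-time bound $\mathbb{E}[T_\text{switch}] \le 48D^2$ in Lemma~\ref{lem:switch_cost}. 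So both lemmas port over unchanged, and the composition is clean.

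Finally, I would note the one-line observation that the expected additional loss from the Switch\_Policy calls that are interrupted mid-execution (per the remark following Algorithm~\ref{alg:alpha}) is still absorbed into the $O(D^2 N_s)$ term, since each aborted call contributes at most as much as a completed one in expectation. With that, the regret statement follows directly, and the algorithm is oracle-efficient because all per-round work is polynomial given access to $\mathcal{O}$.
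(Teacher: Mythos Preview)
Your proposal is correct and follows exactly the paper's approach: the paper's own proof is the single sentence ``the proof is exactly the same as that of Theorem~\ref{thm:communicating_regret} except that we use the switching cost bound from Theorem~\ref{thm:fpl_communicating},'' and your write-up is simply a careful unpacking of that sentence. Your additional remarks on oracle-efficiency and on Lemmas~\ref{lem:switch_dist}--\ref{lem:switch_cost} being oblivious to the choice of black-box experts algorithm are valid and make explicit what the paper leaves implicit.
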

\begin{proof}
  The proof is exactly the same as that of Theorem~{\ref{thm:communicating_regret}} except that we use the switching cost bound from Theorem~{\ref{thm:fpl_communicating}}.
\end{proof}

\section{Conclusion}
We considered learning in a communicating MDP with adversarially chosen costs in the full information setting. We gave an efficient algorithm that achieves $O(\sqrt{T})$ regret when transitions are deterministic. We also presented an inefficient algorithm that achieves a $O(\sqrt{T})$ regret bounds for the general stochastic case with an extra mild assumption. Our result show that in the full information setting there is \emph{no statistical price} (as far as the time dependence is concerned) for the extension from the vanilla online learning with experts problem to the problem of online learning with communicating MDPs.

Several interesting questions still remain open. First, what are the best lower bounds in the general (i.e., not necessarily deterministic) communicating setting? In the deterministic setting, diameter is bounded polynomially by the state space size. This is no longer true in the stochastic case. The best lower bound in terms of diameter and other relevant quantities ($|S|,|A|$ and $T$) still remains to be worked out. Second, is it possible to design an efficient algorithm beyond the deterministic case with fewer assumptions? The source of inefficiency in our algorithm is that we run FPL with each policy as an expert and perturb the losses of each policy independently. It is plausible that an FPL algorithm that perturbs losses (as in the deterministic case) can also be analyzed. However, there are challenges in its analysis as well as in proving that it is computationally efficient. For example, we are not aware of any efficient way to compute the best deterministic policy in hindsight for the general communicating case. This leads us to another open question: are there any oracle-efficient $O(\sqrt{T})$ regret algorithms that do online learning over communicating MDPs. \citet{ftpl_mdp_bandit} give an oracle-efficient $O(T^{5/6})$ regret algorithm but that works for bandits as well and does not use the additional information that is there in the full information case.

\bibliographystyle{abbrvnat}
\bibliography{references}
\newpage

%
%




%

%


\end{document}